\documentclass{article}

\usepackage{microtype}
\usepackage{graphicx}
\usepackage{subcaption}
\usepackage{booktabs} 
\usepackage{xspace}

\usepackage{hyperref}

\usepackage[preprint]{icml2026}

\usepackage{amsmath}
\usepackage{amssymb}
\usepackage{mathtools}
\usepackage{amsthm}

\usepackage[capitalize,noabbrev]{cleveref}

\theoremstyle{plain}
\newtheorem{theorem}{Theorem}[section]

\theoremstyle{definition}

\theoremstyle{remark}

\usepackage[textsize=tiny]{todonotes}
\usepackage[skins]{tcolorbox}
\usepackage{colortbl}
\usepackage{xurl}
\usepackage{multirow}
\usepackage{multicol}
\usepackage{xcolor}

\definecolor{Rosewater}{HTML}{dc8a78}
\definecolor{Flamingo}{HTML}{dd7878}
\definecolor{Pink}{HTML}{ea76cb}
\definecolor{Mauve}{HTML}{8839ef}
\definecolor{Red}{HTML}{d20f39}
\definecolor{Maroon}{HTML}{e64553}
\definecolor{Peach}{HTML}{fe640b}
\definecolor{Yellow}{HTML}{df8e1d}
\definecolor{Green}{HTML}{40a02b}
\definecolor{Teal}{HTML}{179299}
\definecolor{Sky}{HTML}{04a5e5}
\definecolor{Sapphire}{HTML}{209fb5}
\definecolor{Blue}{HTML}{1e66f5}
\definecolor{Lavender}{HTML}{7287fd}
\definecolor{CText}{HTML}{4c4f69}     
\definecolor{Subtext1}{HTML}{5c5f77}
\definecolor{Subtext0}{HTML}{6c6f85}
\definecolor{Overlay2}{HTML}{7c7f93}
\definecolor{Overlay1}{HTML}{8c8fa1}
\definecolor{Overlay0}{HTML}{9ca0b0}
\definecolor{Surface2}{HTML}{acb0be}
\definecolor{Surface1}{HTML}{bcc0cc}
\definecolor{Surface0}{HTML}{ccd0da}
\definecolor{Base}{HTML}{eff1f5}      
\definecolor{Mantle}{HTML}{e6e9ef}    
\definecolor{Crust}{HTML}{dce0e8}     

\icmltitlerunning{\methodInTitle: Progressive Chain-of-Thought Length Calibration for Efficient Large Language Model Reasoning}

\makeatletter
\newcommand{\etc}{etc\@ifnextchar.{}{.\@}}
\makeatother

\def \ie {\emph{i.e.}, }

\newcommand{\methodInTitle}{SmartThinker}
\newcommand{\method}{{\textit{\methodInTitle}}\xspace}

\begin{document}

\twocolumn[
  \icmltitle{\methodInTitle: Progressive Chain-of-Thought Length Calibration \\for Efficient Large Language Model Reasoning}



  \icmlsetsymbol{equal}{*}

  \begin{icmlauthorlist}
    \icmlauthor{Chenzhi Hu}{sjtu}
    \icmlauthor{Qinzhe Hu}{sjtu}
    \icmlauthor{Yuhang Xu}{sjtu}
    \icmlauthor{Junyi Chen}{sjtu} \\
    \icmlauthor{Ruijie Wang}{buaa} 
    \icmlauthor{Shengzhong Liu}{sjtu}
    \icmlauthor{Jianxin Li}{buaa}
    \icmlauthor{Fan Wu}{sjtu}
    \icmlauthor{Guihai Chen}{sjtu}
  \end{icmlauthorlist}

  \icmlaffiliation{sjtu}{Shanghai Jiao Tong University}
  \icmlaffiliation{buaa}{Beihang University}

  \icmlcorrespondingauthor{Shengzhong Liu}{shengzhong@sjtu.edu.cn}
  \icmlcorrespondingauthor{Fan Wu}{wu-fan@sjtu.edu.cn}

  \icmlkeywords{Machine Learning, ICML}

  \vskip 0.3in
]



\printAffiliationsAndNotice{}  

\begin{abstract}
Large reasoning models (LRMs) like OpenAI o1 and DeepSeek-R1 achieve high accuracy on complex tasks by adopting long chain-of-thought (CoT) reasoning paths. 
However, the inherent verbosity of these processes frequently results in redundancy and overthinking.
To address this issue, existing works leverage Group Relative Policy Optimization (GRPO) to reduce LRM output length, but their static length-reward designs fail to adapt to problem difficulty and response-length distributions, causing over-compression and compromised accuracy.
Therefore, we propose \method, a novel GRPO-based efficient reasoning method with progressive CoT length calibration. 
\method makes a two-fold contribution: 
First, it dynamically estimates the optimal length with peak accuracy during training and guides overlong responses toward it to reduce reasoning length while sustaining accuracy.
Second, it dynamically modulates the length-reward coefficient to avoid the unwarranted penalization of correct reasoning paths.
Extensive experimental results show that \method achieves up to 52.6\% length compression with improved accuracy and achieves up to 16.6\% accuracy relative improvement on challenging benchmarks like AIME25.
The source code can be found at \url{https://github.com/SJTU-RTEAS/SmartThinker}.
\end{abstract}
\section{Introduction}

\begin{figure}
    \centering
    \includegraphics[width=1\linewidth]{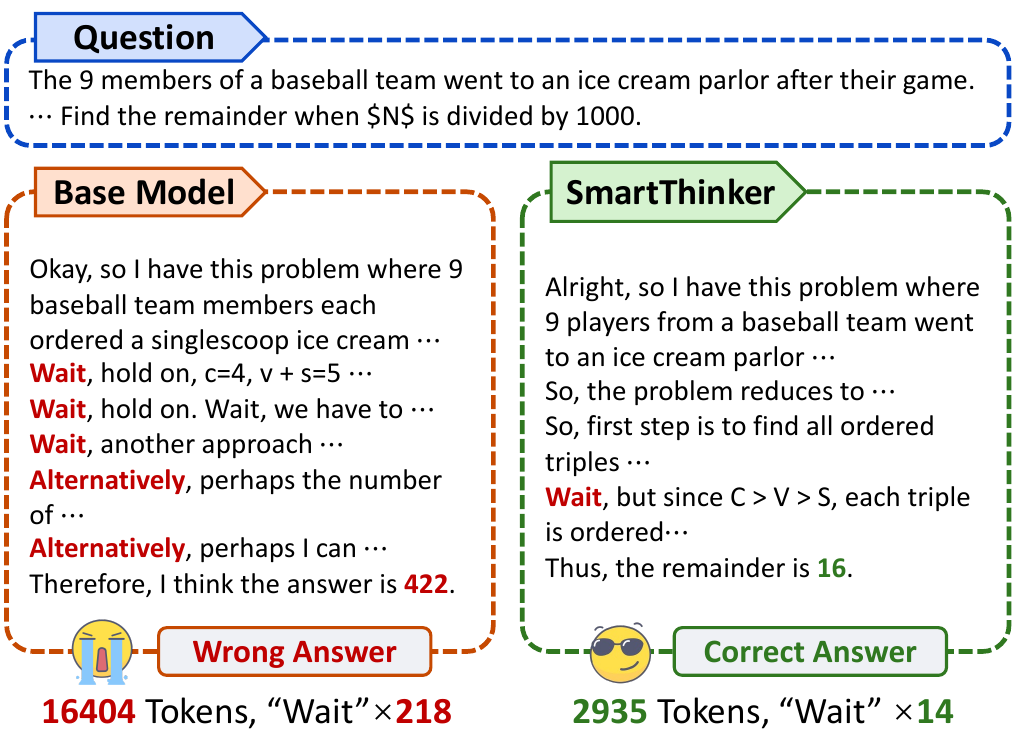}
    \caption{An illustrative comparison between the base model and \method on an AIME25 problem.}
    \label{fig:aime25}
\end{figure}

The evolution of Large Language Models (LLMs) is currently undergoing a paradigm shift from naive pattern matching~\cite{chen2025pre3} to deliberate logical reasoning. While traditional models excel at providing rapid, associative responses, they often struggle with the multi-step rigor required for complex mathematics, coding, and scientific discovery. This limitation has catalyzed the rise of Large Reasoning Models (LRMs) like OpenAI o1~\cite{jaech2024openai} and DeepSeek-R1~\cite{guo2025deepseek}. Unlike their predecessors, relying primarily on the pre-training scale, LRMs leverage inference-time scaling laws. 
By integrating reinforcement learning (RL) with Chain-of-Thought (CoT)~\cite{wei2022chain} processing, LRMs like DeepSeek-R1~\cite{guo2025deepseek} achieve deep thinking, allocating more computational power during the generation phase to verify, correct, and refine their own logic.









However, the reliance on long reasoning traces induces a fundamental challenge known as the \textit{overthinking problem}. 
While increasing reasoning length facilitates answering difficult problems, excessively long chains of thought often lead to diminishing or even negative returns.
On one hand, overthinking consumes excessive tokens, leading to unnecessary computational and time overhead~\cite{chen2026tokenflow, xu2026bubblespec}; on the other hand, overthinking simple problems may cause the model to randomly diverge and miss the correct answer. 
These observations indicate \textbf{the reasoning length should be carefully controlled to balance correctness and efficiency, rather than maximized indiscriminately}.

Motivated by this observation, recent studies have explored strategies for efficient LLM reasoning.
These methods can be broadly categorized into 
1) \textit{training-free approaches}~\cite{xu2026thought, lin2026controlling}, which improve efficiency at inference or prompt level without updating model parameters, 
and 2) \textit{training-based approaches}~\cite{aggarwal2025l1, hou2025thinkprune, yi2026shorterbetter, tu2026learning}, which explicitly shape the model’s reasoning behavior through optimization objectives such as supervised fine-tuning and reinforcement learning with length-aware reward designs. 
Although training-free methods are flexible and computationally inexpensive, training-based approaches, particularly reinforcement learning, tend to achieve more consistent efficiency-accuracy gains by directly optimizing reasoning trajectories. 
Particularly, Group Relative Policy Optimization (GRPO)~\cite{shao2024deepseekmath} improves the sample efficiency and training stability of reinforcement learning and has become a common basis for efficient reasoning method design.





Most GRPO-based mechanisms incorporate a length reward to encourage shorter reasoning trajectories and assign a higher advantage to outputs with fewer tokens. 
While they are effective in compressing Chain-of-Thought and sometimes even improve accuracy, they rely heavily on heuristic assumptions to estimate the optimal reasoning length. 
In particular, the target length is not explicitly modeled according to correctness, causing linear length penalties to deviate from the true length–accuracy tradeoff, and often overshoot the optimal reasoning length needed for problem solving.
Moreover, existing length reward~\cite{aggarwal2025l1, tu2026learning, yi2026shorterbetter} designs are usually static and task-agnostic, applying the same length reward function across problems of varying difficulty. 
They fail to account for the fact that harder problems inherently require longer and more exploratory reasoning, leading long but correct trajectories to be penalized similarly to incorrect ones. 
As a result, such static formulations sacrifice necessary reasoning diversity and inevitably degrade response quality to complex questions.


To bridge this gap, we propose \method, a GRPO-based algorithm that \textbf{jointly optimizes reasoning accuracy and efficiency through adaptive length calibration with dynamic length reward based on optimal length estimation}. 
Our approach is ``smart" in two key aspects. 
First, instead of heuristically penalizing reasoning length, we explicitly characterize the relation between reasoning length and correctness using a Gaussian distribution, enabling us to identify an optimal reasoning length maximizing the probability of success for a given prompt. This replaces blind linear penalties with a principled probabilistic objective. 
Second, we introduce a dynamic length-reward coefficient that ensures the normalized advantage of correct trajectories remains non-negative, preventing valid but longer reasoning paths from being mistakenly suppressed while skipping manual hyperparameter tuning.


We extensively evaluate \method on base models of different scales and mathematical benchmarks of varying difficulty. 
Experimental results show that \method reduces reasoning token usage by up to 52.6\% while improving reasoning accuracy. 
On challenging benchmarks like AIME25, accuracy gains a relative improvement of up to 16.6\%, demonstrating the effectiveness of adaptive reasoning length. \autoref{fig:aime25} presents an illustrative example from AIME25.

Our contributions are summarized as follows:
\begin{itemize}
    \item We identify and analyze the reward design issues in GRPO-based efficient reasoning methods caused by the lack of dynamic reward design.
    \item We propose a probabilistic approach to estimate the optimal reasoning length for each question and design a corresponding dynamic length reward.
    \item We use a dynamic length-reward coefficient to calibrate the weight of the length reward in the total reward, avoiding incorrectly penalizing correct trajectories.
    \item Extensive experiments demonstrate that \method can simultaneously improve both efficiency and accuracy with length reduction of up to 52.6\% and accuracy improvement of up to 16.6\%. 
\end{itemize}
\section{Background and Motivation}

\subsection{Overthinking Phenomenon in LLM Reasoning}

A central motivation of efficient reasoning is the \emph{overthinking} phenomenon, where models expend excessive computational effort on relatively simple tasks, generating unnecessarily long and convoluted reasoning chains that exceed the actual problem complexity. This behavior leads to both inefficiency and increased risk of reasoning errors. From the perspective of reasoning length, overthinking manifests as excessive \textit{verbosity and computational overhead} in Chain-of-Thought prompting~\cite{han2025token}, as well as \textit{performance degradation} caused by overly long reasoning traces~\cite{chen2024not}.

\begin{figure*}
    \centering
    \includegraphics[width=1\linewidth]{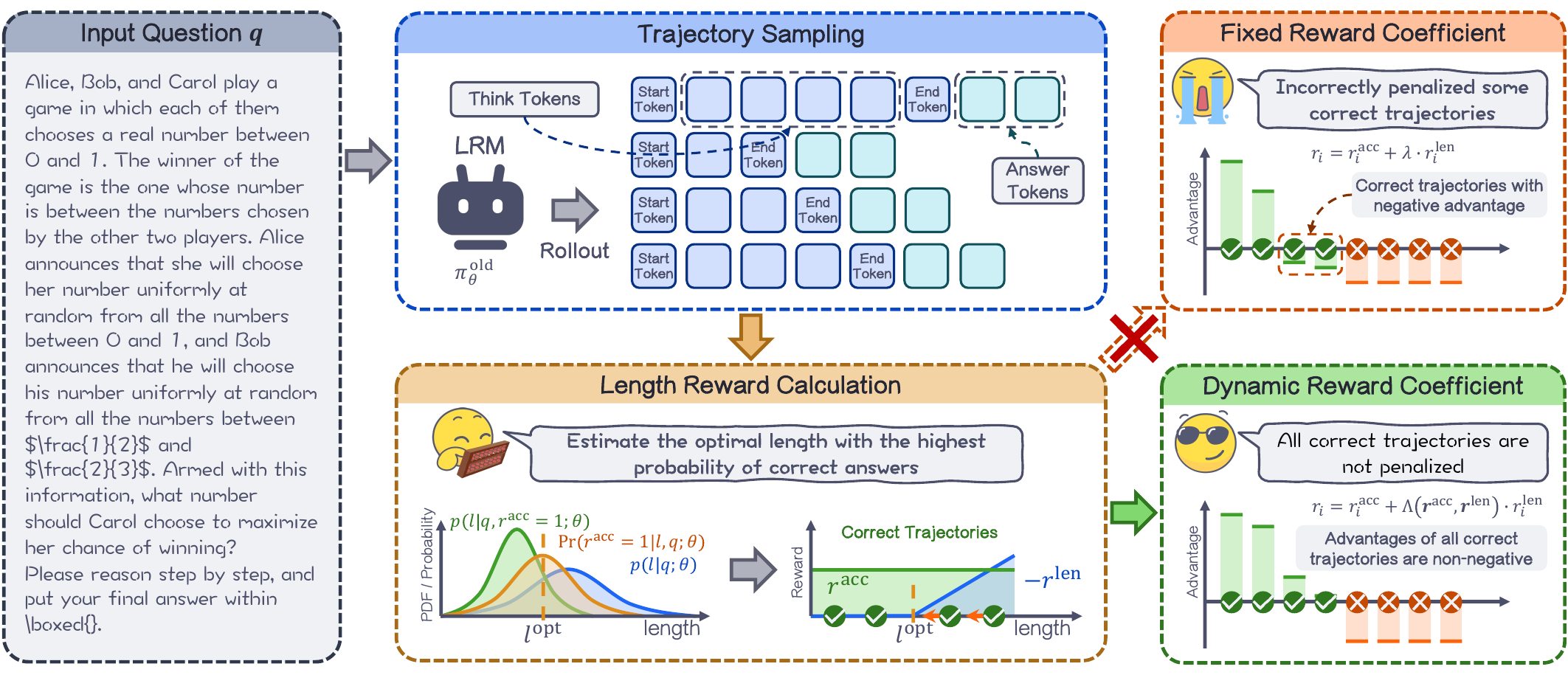}
    \caption{Overview of \method’s dynamic reward and advantage calculation process.}
    \label{fig:overview}
\end{figure*}

Recent studies further show that reasoning accuracy does not monotonically improve with longer outputs. Instead, the relationship between reasoning length and accuracy typically follows an inverted U-shaped curve, with performance peaking at an intermediate, globally optimal length~\cite{wu2025more}. Our experimental results, as shown in Figure~\ref{fig:estimation}, also demonstrate that extending the reasoning beyond this optimal point can even harm accuracy. These observations motivate approaches that explicitly guide models toward an appropriate reasoning length, rather than blindly encouraging longer or shorter reasoning paths.



\subsection{GRPO for Efficient Reasoning}
To address efficiency issues in LLM reasoning, reinforcement learning (RL) has been widely adopted in post-training of large reasoning models (LRMs).
Among existing methods, Group Relative Policy Optimization (GRPO) \cite{shao2024deepseekmath} has emerged as a popular and effective approach.


As a prerequisite, we first briefly introduce GRPO algorithm. As a novel LLM reinforcement learning (RL) method, GRPO has been widely used in post-training of large reasoning models (LRMs).

We provide a complete version of the GRPO algorithm in Appendix~\ref{sec:grpo}. Here we discuss its simplified form. In each training step, the old policy $\pi_\text{old}$ will generate a group of trajectories. Given question $q$, for each trajectory $o_i$ in a group, a simplified training objective can be written as
\begin{equation}
    \max_\theta \frac{\pi_\theta\left(o_i\mid q\right)}{\pi_\text{old}\left(o_i\mid q\right)}\hat{A}_i,
\end{equation}
where $\pi_\theta$ is the current policy to be updated and $\theta$ is its weights. the normalized advantage $\hat{A}_i$ is calculated as
\begin{equation}
    \hat{A}_i = \frac{r_i - \operatorname{mean}(\mathcal{R})}{\operatorname{std}(\mathcal{R})},
\end{equation}
where $\mathcal{R} = \{r_1, \dots, r_G\}$ is the set of rewards within a group.
When $A_i>0$, the GRPO algorithm tends to increase the maximum likelihood of $(o_i \mid q)$. Conversely, when $A_i<0$, the GRPO algorithm tends to decrease the maximum likelihood of $(o_i\mid q)$.

Most existing GRPO-based approaches for efficient reasoning rely on a fixed reward $r_i$ formulation that combines accuracy and length:
\begin{equation}\label{eq:fixed_reward}
    r_i = r_i^\text{acc} + \lambda r_i^\text{len},
\end{equation}
where $r_i^\text{acc}$ and $r_i^\text{len}$ represents the accuracy and length reward respectively. Generally speaking, $r^\text{len}$ is monotonically non-increasing with respect to $l_i$ to reward shorter trajectories, and $\lambda$ is a coefficient.

\subsection{Limitations of Static GRPO Length Reward}

It is worth noting that the reward formulation in Eq.~\eqref{eq:fixed_reward}, while simple and effective in encouraging shorter reasoning, exhibits several fundamental limitations when applied to efficient reasoning. 
In particular, the design is \emph{static} in nature, which manifests in two key aspects:

\textbf{1) Static length reward.} 
In Eq.~\eqref{eq:fixed_reward}, the length reward $r_i^\text{len}$ is computed solely based on the length of the individual trajectory $o_i$, independent of other trajectories sampled for the same prompt. Consequently, the reward fails to account for the joint distribution of length and correctness within a GRPO group, which implicitly reflects the relative difficulty of the input question under the current model.
Although some methods~\cite{yi2026shorterbetter, liu2025learn} design different rewards for varying difficulties, they only consider a few discrete cases separately, lacking continuous control over difficulty. 

\textbf{2) Static length-reward coefficient.} 
Although some methods consider the first case, dynamically discussing length rewards based on in-group accuracy, the reward formulation in Eq.~(3) itself remains static in another crucial aspect.
Specifically, Eq.~(3) employs a fixed coefficient $\lambda$ to balance accuracy and length rewards across all trajectories. Even when the length reward is adapted using group-level statistics, a constant $\lambda$ enforces a global and linear trade-off between correctness and brevity. Under GRPO, where parameter updates are driven by normalized advantage, this design may assign negative advantage to correct but longer trajectories. As a result, GRPO fails to distinguish such valid trajectories from incorrect ones, suppressing necessary exploratory reasoning and potentially degrading performance on complex tasks.

To address these issues, we aim to propose a new efficient reasoning reward that can dynamically adjust the calculation of the reward based on the relative difficulty of the question with respect to the model whose weights are being updated, while ensuring that correct trajectories are not incorrectly penalized.
\section{\method Design}

We propose \method, a GRPO-based efficient reasoning method with progressive CoT length calibration. \autoref{fig:overview} shows an overview of our method. 
We first estimate the length and accuracy distribution through trajectories within the group, and calculate the optimal length with the highest accuracy. The estimated optimal length reflects the relative difficulty of the problem under the current policy. Then, we design a dynamic reward function to compress trajectory length by guiding correct but overly long trajectories toward the optimal length. Additionally, we introduce a dynamic length-reward coefficient to prevent the normalized advantages of correct trajectories from becoming negative, thereby avoiding confusion between overly long trajectories and incorrect ones.

\subsection{Problem Formulation}

Given a prompt $q$, we sample a group of $G$ reasoning trajectories $\{o_1, \dots, o_G\}$ from the policy $\pi_\theta$. Each trajectory $o_i$ is a sequence of tokens with length $l_i$. Let $r_i^\text{acc} \in \{0, 1\}$ denote the correctness reward and $\boldsymbol{r}^\text{acc}$ represent the vector of all correctness rewards within a group. Let $\mathcal{L}=\{l_i\mid i\in[1, G]\}$, and $\mathcal{L}^\text{acc}=\left\{l_i\mid r_i^\text{acc}=1,i\in[1, G]\right\}$ denotes the subset of the length of the correct trajectories within a group. Our objective is to optimize $\pi_\theta$ to maximize a composite reward $r$ that balances reasoning accuracy with efficiency, formulated as:
\begin{equation}
    r_i = f\left(i, \boldsymbol{r}^\text{acc}, \mathcal{L}\right)
\end{equation}
where $f$ accounts for the accuracy and length of both the current trajectory and other trajectories within the group.

\subsection{Optimal Reasoning Length Derivation}

\begin{figure}[t]
    \centering
    \includegraphics[width=0.95\linewidth]{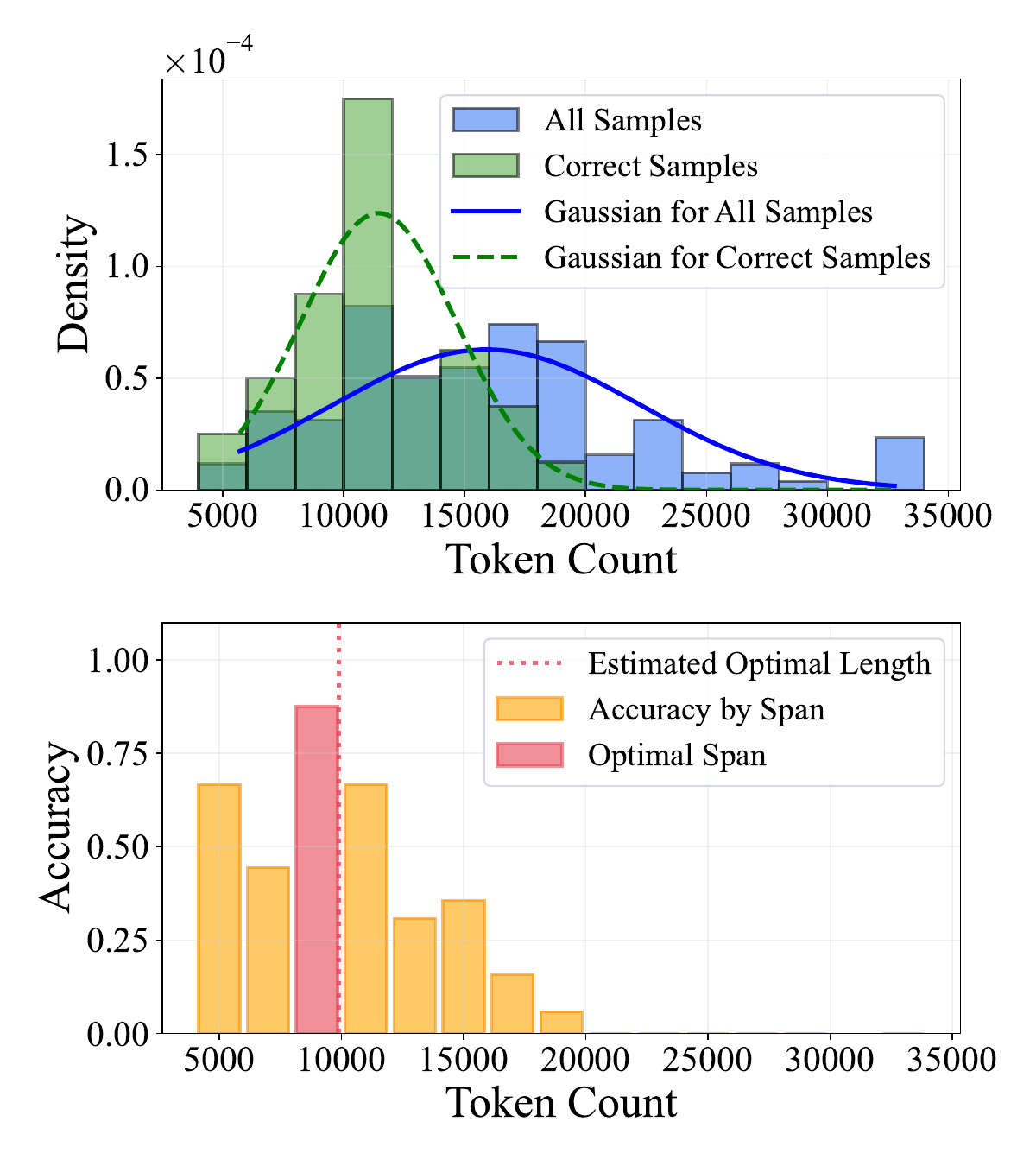}
    \caption{Estimation of the optimal length. The subfigure above shows the distribution of all samples and correct samples and the one below shows the span with the highest accuracy and the estimated optimal length.}
    \label{fig:estimation}
\end{figure}

Inspired by ShorterBetter~\cite{yi2026shorterbetter}, we calculate an optimal reasoning length for each prompt during rollout stages. ShorterBetter directly sets the optimal length to the shortest correct trajectory length. However, in fact, due to the randomness of model sampling, the shortest correct trajectory may lie in a very marginal position within the distribution of all correct trajectories, and directly approaching this length may lead to a decrease in model accuracy. Moreover, not all responses need to be compressed. When the responses to a question are already very short, or when the question is difficult while the model is under-reasoning, compressing the length may cause the model to lose its ability for accurate reasoning. We need to propose a new method for calculating the optimal length to achieve dynamic length compression with response distribution awareness.

Our optimal length calculation method is motivated by two observations. First, according to the previous study, LRMs exhibits peak accuracy in the responses to the same question across different lengths~\cite{wu2025more}, so we propose that \textbf{the optimal length should be defined as the value that maximizes the conditional probability of correctness}, \ie $l^{\text{opt}} = \arg\max_l \Pr(r^\text{acc}=1 \mid l,q;\theta)$. 
Secondly, we observe that the output length distribution of LRM for a given question, as well as the length of correct trajectories, often exhibits a distribution that is high in the middle and narrow at both ends. Figure~\ref{fig:estimation} gives an example of the length distribution. Therefore, we consider modeling the length distribution using a Gaussian distribution.
Under the assumption that both the general length distribution and the successful length distribution follow Gaussian profiles, we derive the following theorem.
\begin{theorem}
    \label{thm:opt_len}
    Let $\theta$ be the parameter of the current policy, $q$ be the input prompt. Assume that the distribution of $l$ under policy $\pi_\theta$ is $(l\mid q; \theta)\sim N(\mu_1,\sigma_1^2)$ and the distribution of $l$ given the correct reasoning condition is $\left(l\mid r^{acc}=1,q;\theta\right)\sim N(\mu_2,\sigma_2^2)$. $\Pr\left(r^{acc}=1\mid l;\theta\right)$ has a unique finite maximum $\iff \sigma_1^2>\sigma_2^2$, and the point is $\arg\max\limits_l\Pr\left(r^{acc}=1\mid l;\theta\right)=\frac{\sigma_1^2\mu_2-\sigma_2^2\mu_1}{\sigma_1^2-\sigma_2^2}$.
\end{theorem}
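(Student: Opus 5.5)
The plan is to apply Bayes' rule and reduce the question to maximizing a quadratic in the exponent. Write $p=\Pr(r^{acc}=1\mid q;\theta)\in(0,1]$ for the marginal success probability, which does not depend on $l$. Then
\begin{equation*}
\Pr\left(r^{acc}=1\mid l;\theta\right)=\frac{p\,\phi_2(l)}{\phi_1(l)},
\end{equation*}
where $\phi_k$ denotes the density of $N(\mu_k,\sigma_k^2)$. Since $p$ and the normalizing constants $\sigma_1/\sigma_2$ are positive and independent of $l$, maximizing the conditional probability is equivalent to maximizing the log-ratio
\begin{equation*}
g(l)=\frac{(l-\mu_1)^2}{2\sigma_1^2}-\frac{(l-\mu_2)^2}{2\sigma_2^2}.
\end{equation*}
Because $\log$ is strictly increasing, the maximizers of $g$ and of the conditional probability coincide, and a unique finite maximum of one corresponds exactly to a unique finite maximum of the other.

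Next I expand $g$ as a polynomial in $l$. The coefficient of $l^2$ is $a=\frac{1}{2}\left(\frac{1}{\sigma_1^2}-\frac{1}{\sigma_2^2}\right)$ and the coefficient of $l$ is $b=\frac{\mu_2}{\sigma_2^2}-\frac{\mu_1}{\sigma_1^2}$. The equivalence then splits into three cases on the sign of $a$:
\begin{itemize}
\item If $\sigma_1^2>\sigma_2^2$, then $a<0$, so $g$ is a strictly concave parabola with a unique finite maximizer.
\item If $\sigma_1^2<\sigma_2^2$, then $a>0$ and $g\to+\infty$ as $|l|\to\infty$, so there is no finite maximum.
\item If $\sigma_1^2=\sigma_2^2$, then $g$ is affine. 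It is unbounded when $\mu_1\neq\mu_2$ and constant when $\mu_1=\mu_2$; in the constant case every point is a maximizer, so the maximum is not unique.
\end{itemize}
This establishes both directions of the ``$\iff$''.

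Finally, in the case $a<0$ I set $g'(l)=0$, which gives $l^*=-b/(2a)$. Multiplying the numerator and denominator by $\sigma_1^2\sigma_2^2$ yields
\begin{equation*}
l^*=\frac{\sigma_1^2\mu_2-\sigma_2^2\mu_1}{\sigma_1^2-\sigma_2^2},
\end{equation*}
which is the claimed formula.

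The main obstacle is not the calculus but the boundary bookkeeping. The equal-variance case is the easiest to mishandle, and the degenerate subcase $\mu_1=\mu_2$ must be ruled out as a ``unique'' maximum. A related modeling point deserves a remark: a ratio of Gaussian densities times $p$ could exceed $1$, so it cannot be a true probability for all $l$. I will note that the argument only uses the Bayes expression up to a positive constant, so the location of the argmax is unaffected by this.
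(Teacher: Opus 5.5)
Your proposal is correct and follows essentially the same route as the paper: Bayes' rule, reduction to the log-density ratio $\frac{(l-\mu_1)^2}{2\sigma_1^2}-\frac{(l-\mu_2)^2}{2\sigma_2^2}$, the first-order condition, and the sign of the curvature $\frac{1}{\sigma_1^2}-\frac{1}{\sigma_2^2}$. Your explicit three-case split for the converse, including the constant case $\sigma_1^2=\sigma_2^2$, $\mu_1=\mu_2$ where the maximizer is not unique, is more complete than the paper's proof, which only checks the second derivative and leaves the remaining cases to its informal discussion in the appendix.
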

This theorem can be proved by obtaining the expression of $\Pr(r^\text{acc}=1 \mid l,q;\theta)$ with respect to $l$ using the Bayesian formula, and then conducting monotonicity analysis through derivatives. The detailed proof of the theorem is given in Appendix \ref{sec:proof}. We give an example of how the theorem works in \autoref{fig:estimation}. 
It can be seen that the distributions of all samples and correct samples are both similar to a Gaussian distribution, and the estimated optimal length is very close to the optimal span with the highest accuracy. There are also two more extreme cases: $\sigma_1^2<\sigma_2^2$ and $\sigma_1^2=\sigma_2^2$. We provide the discussion of them in Appendix~\ref{sec:condition}. The complete formula for optimal length is:

\begin{equation}
    l^\text{opt} = \begin{cases} 
    \frac{\sigma_1^2\mu_2-\sigma_2^2\mu_1}{\sigma_1^2-\sigma_2^2}, & \text{if } \sigma_1^2 > \sigma_2^2, \\
    +\infty, & \text{if } \sigma_1^2 = \sigma_2^2 \text{ and } \mu_1 < \mu_2, \\
    0, & \text{otherwise}.
    \end{cases}
\end{equation}

\subsection{Length Calibration with Distribution Estimation}

\label{sec:samp}

Since GRPO samples $G$ trajectories for each prompt in each training round, we can estimate $\hat\mu_1$, $\hat\mu_2$, $\hat\sigma_1$, $\hat\sigma_2$ through the sampling results, \ie $\hat\mu_1=\operatorname{mean}(\mathcal{L})$, $\hat\sigma_1=\operatorname{std}(\mathcal{L})$, $\hat\mu_2=\operatorname{mean}(\mathcal{L}_i^\text{acc})$ and $\hat\sigma_2=\operatorname{std}(\mathcal{L}_i^\text{acc})$.

Based on the analysis above, we propose a new calculation method for optimal length:
\begin{equation}
    \hat l^\text{opt}=\operatorname{clip}\left(l^*,\min\left(\mathcal L\right),\max\left(\mathcal L\right)\right),
\end{equation}
where
\begin{equation}
    l^* = \begin{cases} 
    \frac{\hat\sigma_1^2\hat\mu_2-\hat\sigma_2^2\hat\mu_1}{\hat\sigma_1^2-\hat\sigma_2^2}, & \text{if } \hat\sigma_1^2 > \hat\sigma_2^2, \\
    \max\left(\mathcal L\right), & \text{if } \hat\sigma_1^2 = \hat\sigma_2^2 \text{ and } \hat\mu_1 < \hat\mu_2, \\
    \min\left(\mathcal L\right), & \text{otherwise}.
    \end{cases}
\end{equation}
It must be acknowledged that the Gaussian distribution assumption is highly idealized. However, from a heuristic perspective, $\hat l^\text{opt}$ adjusts reasoning length based on question difficulty relative to the policy. When the correct trajectory is generally longer, $\hat l^\text{opt}>\hat\mu$ corrects underthinking by encouraging depth; when the correct trajectory is generally shorter, $\hat l^\text{opt}<\hat\mu$ mitigates overthinking by promoting conciseness.

To make the model learn shorter reasoning paths, we only apply a length penalty to correct trajectories with length greater than the optimal length. For all incorrect trajectories, we do not apply a length reward. The length reward can be summarized as:

\begin{equation}
    r_i^\text{len}=
    \begin{cases}
        0, & \text{if } r_i^\text{acc}=0, \\
        -\operatorname{ReLU}\left(l_i-\hat l^\text{opt}\right), & \text{if } r_i^\text{acc}=1. \\
    \end{cases}
\end{equation}

In the reward function, $\hat l^\text{opt}$ dynamically adjusts the proportion of compressed trajectories. When $\hat l^\text{opt} \geqslant \max\left\{\mathcal{L}^\text{acc}\right\}$, the length reward for all trajectories becomes 0, and \method degenerates into classical GRPO training. At this point, the training objective shifts from compressing length back to enhancing the model's reasoning capability.

\subsection{Dynamic Length-Reward Coefficient}

Since GRPO normalizes the reward to obtain the advantage, a static length-reward coefficient may result in negative advantages for overly long correct trajectories. To ensure that correct trajectories have non-negative advantages and incorrect trajectories have non-positive advantages, we design a dynamic length-reward coefficient.

The constraints can be defined as
\begin{equation}
    \label{eq:constraint_neq}
    \begin{cases}
    1+\lambda r_i^\text{len}\geqslant\operatorname{mean}(\boldsymbol{r}^\text{acc}+\lambda \boldsymbol{r}^\text{len}), &\text{ if }r_i^\text{acc}=1, \\
    \lambda r_i^\text{len}\leqslant\operatorname{mean}(\boldsymbol{r}^\text{acc}+\lambda \boldsymbol{r}^\text{len}), &\text{ if }r_i^\text{acc}=0,
    \end{cases}
\end{equation}
where $\boldsymbol{r}^\text{len}$ is the list of length rewards. Considering the length reward defined in Section~\ref{sec:samp}, we have
\begin{equation}
    \label{eq:reward_neq}
    \begin{cases}
    r_i^\text{len}\leqslant 0, &\text{ if }r_i^\text{acc}=1,\\
    r_i^\text{len}=0, &\text{ if }r_i^\text{acc}=0.
    \end{cases}
\end{equation}
With \autoref{eq:reward_neq}, when the first constraint of \autoref{eq:constraint_neq} is satisfied, the second constraint is satisfied automatically. Solving the first constraint, we have
\begin{equation}
    0<\lambda\leqslant \frac{p^\text{err}}{\operatorname{mean}(\boldsymbol{r}^\text{len})-\min\left(\boldsymbol{r}^\text{len}\right)},
\end{equation}
where $p^\text{err}$ is the ratio of incorrect trajectories. To improve length compression efficiency, we define the dynamic length-reward coefficient as
\begin{equation}
\begin{aligned}
    \Lambda\left(\boldsymbol{r}^\text{acc},\boldsymbol{r}^\text{len}\right)&=\max \lambda\\
    &=\frac{p^\text{err}}{\operatorname{mean}(\boldsymbol{r}^\text{len})-\min\left(\boldsymbol{r}^\text{len}\right)}.
\end{aligned}
\end{equation}

\begin{table*}[htbp]
\centering
\caption{Performance comparison across base models and benchmarks. The \colorbox{Teal!10}{\method} rows highlight our results.}
\label{tab:main_results}
\resizebox{1.0\textwidth}{!}{ 
\begin{tabular}{l|c|cc|cc|cc|ccc}
\toprule
\multirow{2}{*}{\textbf{Method}}& \multirow{2}{*}{\textbf{\#RL Steps} ↓} & \multicolumn{2}{c|}{\textbf{Math500}} & \multicolumn{2}{c|}{\textbf{AIME25}} & \multicolumn{2}{c|}{\textbf{AMC23}} & \multicolumn{3}{c}{\textbf{Average}} \\
\cmidrule(lr){3-4} \cmidrule(lr){5-6} \cmidrule(lr){7-8} \cmidrule(lr){9-11}
 & & Len. ↓ & Acc.(\%) ↑ & Len. ↓ & Acc.(\%) ↑ & Len. ↓ & Acc.(\%) ↑ & Len. ↓ & Acc. (\%) ↑ & AE ↑ \\
\midrule
\rowcolor{Blue!10}
\multicolumn{11}{c}{\textbf{DeepSeek-R1-Distill-Qwen-1.5B}} \\
\midrule
Base Model          & N/A & 5420 & 84.9 & 15199 & 24.2  & 9320 & 73.1 & 9980 & 60.7 & N/A \\
Truncated Think 4k  & N/A & 2950 & 71.4 & 4299 & 11.7 & 3627 & 42.5 & 3625 & 41.9 & -0.91 \\
Truncated Think 8k  & N/A & 3971 & 72.0 & 7531 & 17.5 & 5471 & 50.6 & 5658 & 46.7 & -0.72 \\
\midrule
ShorterBetter       & 300 & 1008 & 71.0  & 3727  & 19.0 & 2246 & 66.9 & 2327 & 52.3 & 0.07 \\
ThinkPrune-4k       & N/A & 2744 & 84.1 & 7462 & \underline{22.5} & 4201 & \textbf{76.3} & 4802 & \underline{60.95} & \underline{0.53} \\
LASER-DE-4096       & 1000 & 2720 & \textbf{85.1} & 7706 & \underline{22.5}   & 4330 & \underline{71.9} & 4919 & 59.8 & 0.42 \\
\rowcolor{Green!10}
\textbf{\method}     & \textbf{150} & 2645 & \underline{84.5} & 8431 & \textbf{25.0} & 4421 & \textbf{76.3} & 5169 & \textbf{61.9} & \textbf{0.54} \\
\midrule
\rowcolor{Blue!10}
\multicolumn{11}{c}{\textbf{DeepSeek-R1-Distill-Qwen-7B}} \\
\midrule
Base Model          & N/A & 3928 & 92.3 & 14829 & 35.0 & 6634 & 91.9 & 8463 & 73.1 & N/A \\
Truncated Think 4k  & N/A & 2804 & 72.5 & 4434 & 10.8 & 3511 & 45.6 & 3583 & 43.0 & -1.48 \\
Truncated Think 8k  & N/A & 3421 & 75.2 & 4303 & 13.3 & 4657 & 53.1 & 4127 & 47.2 & -1.25 \\
\midrule
ShorterBetter       & 200 & 1346 & 88.5 & 6409 & \underline{30.0} & 2719 & 86.3 & 3491 & 68.3 & 0.26 \\
LASER-DE-4096    & 1000 & 1903 & \textbf{93.1} & 6578 & \underline{30.0} & 2946 & \textbf{90.0} & 3809 & \underline{71.0} & \underline{0.41} \\
\rowcolor{Green!10}
\textbf{\method}     & \textbf{75} & 2753 & \underline{93.0} & 9277  & \textbf{40.8} & 4118 & \textbf{90.0} & 5382 & \textbf{74.5} & \textbf{0.43} \\
\midrule
\rowcolor{Blue!10}
\multicolumn{11}{c}{\textbf{Qwen3-4B-Thinking-2507}} \\
\midrule
Base Model          & N/A & 6680 & 97.0 & 21662 & 69.2 & 10777 & 99.4 & 13040 & 88.5 & N/A \\
Truncated Think 4k  & N/A & 3287 & 44.3 & 4608 & 0.0 & 4319 & 12.0 & 4071 & 18.8 & -3.25 \\
Truncated Think 8k  & N/A & 4829 & 43.0 & 4609 & 0.8 & 7067 & 15.0 & 5501 & 19.6 & -3.31 \\
\midrule
\rowcolor{Green!10}
\textbf{\method}     & \textbf{50} & 3488 & \textbf{96.6} & 13761 & \textbf{71.7} & 5992 & \textbf{98.8} & 7747 & \textbf{89.0} & \textbf{0.42} \\
\bottomrule
\end{tabular}
}
\end{table*}

\subsection{Overall Reward and Advantage} 

With the coefficient calculated above, the total reward is calculated as follows:
\begin{equation}
    r_i=r_i^\text{acc}+\Lambda\left(\boldsymbol{r}^\text{acc},\boldsymbol{r}^\text{len}\right)\cdot r_i^\text{len}.
\end{equation}
With the reward function above, we can calculate trajectory-wise advantage as follows:
\begin{equation}
    \hat{A}_{i}=\frac{r_i-\operatorname{mean}\left\{r_j\right\}_{j=1}^G}{\operatorname{std}\left\{r_j\right\}_{j=1}^G}.
\end{equation}
The details of the GRPO algorithm we use is shown in Appendix~\ref{sec:grpo}.
\section{Experiment}

\subsection{Experiment Setup}

\textbf{Datasets.} We post-train our base model on DeepScaleR-preview~\cite{deepscaler2025}, a mathematical dataset comprising 40K problems of varying difficulty drawn from AIME, AMC, Omni-MATH~\cite{gao2025omni}, and the STILL dataset~\cite{nayab2024concise}. No difficulty-based sampling is applied during training. We evaluate our method on three standard math benchmarks: MATH, AIME25, and AMC23.

\textbf{Models.} We evaluate our method on DeepSeek-R1-Distill-Qwen-1.5B, DeepSeek-R1-Distill-Qwen-7B~\cite{guo2025deepseek}, and Qwen3-4B-Thinking-2507~\cite{yang2025qwen3}. The DeepSeek-R1-Distill models are widely adopted in prior efficient reasoning studies, and we therefore report direct comparisons with existing methods on these two models. In addition, we demonstrate the applicability and effectiveness of our approach to more recent architectures by fine-tuning Qwen3-4B-Thinking-2507.

\textbf{Training Configurations.} We implement \method using verl~\cite{sheng2025hybridflow}. For all models, we use a batch size of 64, a group size of 8, a minibatch size of 16, and a maximum reasoning length of 8000 tokens. To improve training efficiency, we omit the KL loss. We fine-tune the 1.5B, 7B, and 4B models with a constant learning rate of $1\times10^{-6}$ for 150, 75, and 50 training steps, respectively.

\textbf{Evaluation.} All models and benchmarks were tested under the settings of \texttt{temperature=}0.6, \texttt{top\_p}=1.0, \texttt{max\_tokens}=32768. We sample 4 responses for each question in each benchmark. For each benchmark, we adopt three metrics: 1) \textbf{Accuracy (Acc.)} or Pass@1, which is defined as the ratio of correct responses among all responses; 2) \textbf{Length (Len.)}, the number of average output tokens; 3) \textbf{AE Score (AE)}, which is a balanced metric considering both accuracy and efficiency proposed by DeepScaleR~\cite{deepscaler2025}. The detail description of the AE Score is provided in Appendix~\ref{sec:ae}.

\begin{table*}[htbp]
\centering
\caption{The result of combining AutoThink-Stage2 and \method.}
\label{tab:autothink}
\resizebox{1.0\textwidth}{!}{ 
\begin{tabular}{l|c|cc|cc|cc|ccc}
\toprule
\multirow{2}{*}{\textbf{Method}}& \multirow{2}{*}{\textbf{\#RL Steps} ↓} & \multicolumn{2}{c|}{\textbf{Math500}} & \multicolumn{2}{c|}{\textbf{AIME25}} & \multicolumn{2}{c|}{\textbf{AMC23}} & \multicolumn{3}{c}{\textbf{Average}} \\
\cmidrule(lr){3-4} \cmidrule(lr){5-6} \cmidrule(lr){7-8} \cmidrule(lr){9-11}
 & & Len. ↓ & Acc.(\%) ↑ & Len. ↓ & Acc.(\%) ↑ & Len. ↓ & Acc.(\%) ↑ & Len. ↓ & Acc. (\%) ↑ & AE ↑ \\
\midrule
Base Model          & N/A & 5420 & 84.9 & 15199 & 24.2  & 9320 & 73.1 & 9980 & 60.7 & N/A \\
\midrule
\rowcolor{Blue!10}
\multicolumn{11}{c}{\textbf{AutoThink + \method}} \\
\midrule
AutoThink-Stage2  & 440 & 3593 & 85.4 & 10571 & 27.5 & 6206 & 75.0 & 6720 & 62.6 & 0.41 \\
\midrule
AutoThink-Stage2 → Stage3  & 440 + 60 & 2196 & 85.2 & 9084 & 24.2 & 4117 & 73.8 & 5132 & 61.1 & 0.50 \\
\rowcolor{Green!10}
\textbf{AutoThink-Stage2 → \method} & 440 + \textbf{50} & 2792 & 84.9  & 8760 & 29.2 & 5134 & 74.4 & 5562 & \textbf{62.8} & \textbf{0.55} \\
\midrule
\rowcolor{Blue!10}
\multicolumn{11}{c}{\textbf{ThinkPrune → \method}} \\
\midrule
ThinkPrune-4k  & N/A & 2744 & 84.1 & 7462 & 22.5 & 4201 & 76.3 & 4802 & 61.0 & 0.53 \\
\midrule
ThinkPrune-4k → 3k  & N/A & 2224 & 84.0 & 5708 & 20.0 & 3104 & 75.0 & 3679 & 59.7 & 0.54 \\
ThinkPrune-4k → 3k → 2k  & N/A & 1928 & 83.3 & 5010 & 16.7 & 2884 & 70.6 & 3274 & 56.9 & 0.35 \\
\rowcolor{Green!10}
\textbf{ThinkPrune-4k → \method} & + 75 & 2471 & 85.1  & 6933 & 24.2 & 3939 & 74.4 & 4448 & \textbf{61.2} & \textbf{0.58} \\

\bottomrule
\end{tabular}
}
\end{table*}

\textbf{Baselines.} We compare against the base model, training-free truncation baselines, and representative single-stage RL methods optimized for length rewards. We adopt the following baselines: \textbf{Base Model}, \textbf{Truncated Think $n$k}, \textbf{ShorterBetter}~\cite{yi2026shorterbetter}, \textbf{ThinkPrune-4k}~\cite{hou2025thinkprune}, \textbf{LASER-DE-4096}~\cite{liu2025learn}. For all post-trained baselines, we directly use their open-sourced models. Except for ThinkPrune, all baseline models are trained on the DeepScaleR-Preview dataset~\cite{deepscaler2025}, and the training set of ThinkPrune derives from historical AIME and AMC problems, which is also very similar, ensuring a relatively fair comparison in the experiment.

\subsection{Main Results}

We present our main results in \autoref{tab:main_results}. \method consistently achieves the strongest overall performance across all base models. On DeepSeek-R1 1.5B and 7B, it attains the highest average accuracy and AE score, demonstrating an optimal balance between accuracy and efficiency, while further improving model accuracy. Notably, it is the only method that improves average accuracy consistently across models of different scales.


\textbf{Difficulty-aware length adjustment.} \method exhibits adaptive inference lengths across benchmarks of varying difficulty. On relatively easier benchmarks such as MATH500, \method achieves high compression ratios, reducing unnecessary token waste. On more challenging benchmarks like AIME25, \method can achieve higher accuracy by generating more tokens. This suggests that \method enables models to produce problem-dependent reasoning lengths. And the model's efficient reasoning capability in 8k context training can be transferred to outputs exceeding 8k.

\textbf{Training Efficiency}. In addition to balancing accuracy and efficiency, \method also achieves training efficiency, reaching performance comparable or superior to other methods using only 150 and 75 training steps on 1.5B and 7B models respectively. Since our reward design does not update model weights based on completely correct or completely incorrect trajectories, our method actually has better training efficiency than what the training steps suggest. If difficulty-based sampling is performed before training, even fewer training steps could achieve the same effect. We also observe that for base models with stronger performance, \method requires fewer training steps, needing only 50 steps of training on Qwen3-4B-Thinking-2507. These results suggest that \method may achieve higher training efficiency as base models become stronger.

\textbf{Baseline Analysis.} 
While ShorterBetter achieves the shortest average reasoning length, it suffers substantial performance degradation: its reward fails to penalize overly short reasoning paths when answers are partially correct, allowing superficial reasoning to persist, and occasionally assigns positive advantage to fully incorrect trajectories, causing the model to learn from errors. In contrast, LASER-DE-4096 and ThinkPrune adopt less aggressive length reduction strategies, performing well on MATH500, but both experience notable accuracy drops on more challenging benchmarks such as AIME25.

\begin{table*}[t]
\centering
\caption{OOD evaluation on models of different scales.}
\label{tab:ood}
\resizebox{.9\textwidth}{!}{ 
\begin{tabular}{@{}c|cc|cc|cc|cc|cc@{}}
\toprule
\multirow{2}{*}{\textbf{Models}}  & \multicolumn{2}{c|}{\textbf{MMLU}} & \multicolumn{2}{c|}{\textbf{MathQA}} & \multicolumn{2}{c|}{\textbf{LiveCode}} & \multicolumn{2}{c|}{\textbf{HumanEval}} & \multicolumn{2}{c}{\textbf{Average}} \\ \cmidrule(l){2-11} 
                          & Len. ↓     & Acc.(\%) ↑    & Len. ↓      & Acc.(\%) ↑     & Len.↓       & Acc.(\%) ↑      & Len. ↓       & Acc.(\%) ↑       & Len. ↓       & Acc.(\%) ↑    \\ \midrule
                          \rowcolor{Blue!10}
                          \multicolumn{11}{c}{\textbf{DeepSeek-R1-Distill-Qwen-1.5B}} \\
\midrule
Base Model              & 1686      & 46.02         & 5206       & 84             & 10668       & 24.2            & 4740        & 68.9             & 5575        & 55.78         \\
\rowcolor{Green!10}
\textbf{\method} & 994       & 46.48         & 2581       & 84.6           & 7658        & 25.4            & 3099        & 69.5             & 3583        & 56.495        \\ \midrule
\rowcolor{Blue!10}
\multicolumn{11}{c}{\textbf{Qwen3-4B-Thinking-2507}} \\
\midrule
Base Model         & 2978      & 80.8          & 3253       & 91.3           & 12388       & 68.1            & 4507        & 95.7             & 5781.5      & 83.975        \\
\rowcolor{Green!10}
\textbf{\method}     & 2167      & 81.1          & 2224       & 91.7           & 9376        & 68.4            & 3158        & 96.5             & 4231.25     & 84.425        \\ \bottomrule

\end{tabular}}
\end{table*}

\subsection{Combination with Multi-Stage Frameworks}

\method can not only be used as a standalone step to fine-tune LRM, but also integrated into other multi-stage efficient reasoning methods. We select AutoThink~\cite{tu2026learning} and ThinkPrune~\cite{hou2025thinkprune}. AutoThink is a typical multi-stage GRPO-based efficient reasoning method consisting of three stages. 
We replace the final stage of AutoThink with \method to train the 1.5B model, and compared it with the original method.
ThinkPrune provides a series of models, including models trained in a single stage and models trained in multiple stages. ThinkPrune's paper shows that through multi-step training, the model's inference length can be further compressed, but the inference accuracy may decrease. We train the 1.5B model using \method based on ThinkPrune-4k and compare it with ThinkPrune-iter3k (4k → 3k) and iter2k (4k → 3k → 2k).
The results are shown in \autoref{tab:autothink}. For both methods, \method outperforms the original models, demonstrating that our method can be flexibly inserted as a plug-in into the training phase of other approaches and can deliver better performance.

\subsection{Training Process Analysis}
\label{sec:train_process}

\begin{figure*}[t]
    \centering
    \includegraphics[width=1\linewidth]{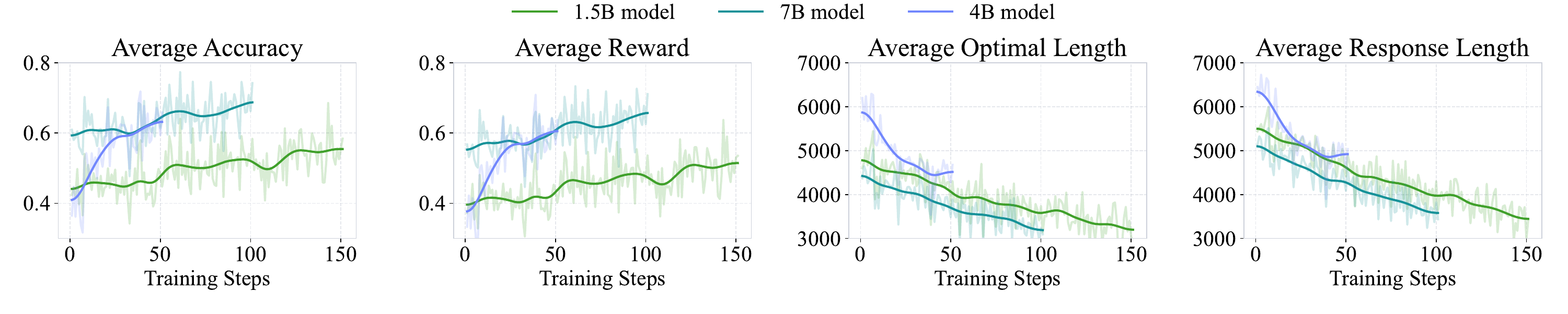}
    \caption{Changes in various metrics during the training process}
    \label{fig:train_analysis}
\end{figure*}

We illustrate four different metrics during the training process in \autoref{fig:train_analysis}. It can be observed that as training progresses, the model's accuracy gradually improves while the length continues to decrease. In addition, although we set a dynamically changing reward function, the reward can still steadily increase during training, consistent with the expectations of reinforcement learning.

We also observe an interesting phenomenon: during training, the optimal length is generally lower than the output length. This implies: 1) overthinking in model responses is widespread; 2) as the policy updates, the model's optimal length dynamically changes, supporting the necessity of setting dynamic length rewards; 3) simply making the output length approach the optimal length can effectively shorten the CoT.

\subsection{Out-of-Domain Test}

To assess whether the efficient reasoning behavior learned from mathematical reasoning tasks generalizes to other domains without compromising model accuracy, we also conduct out-of-domain (OOD) tests on three additional benchmarks: MathQA~\cite{amini2019mathqa}, MMLU~\cite{hendrycks2020measuring, hendrycks2020aligning},  LiveCodeBench~\cite{jain2025livecodebench}, and HumanEval~\cite{chen2021evaluating}.

We test OOD benchmarks on the 1.5B and 4B models. The comparison between \method and the base models of both 1.5B and 4B is shown in \autoref{tab:ood}. The results show that both models can achieve accuracy improvements while compressing sequence length on all OOD benchmarks, demonstrating the generalization capability of \method. This also indicates that the accuracy improvement brought by \method does not stem from training on more math problems, but rather from enabling the model to learn beneficial reasoning paths through reasonable reward allocation.

We also provide a comparison between \method and all baselines. The results are shown in Appendix~\ref{sec:ood}.

\subsection{Ablation Study}

\begin{table}[t]
\caption{Ablation study on DeepSeek-R1-Distill-Qwen-1.5B}
\label{tab:ablation}
\resizebox{1.0\linewidth}{!}{
\begin{tabular}{@{}l|cc|cc|cc|cc@{}}
\toprule
\multirow{2}{*}{\textbf{Method}} & \multicolumn{2}{c|}{\textbf{Math500}} & \multicolumn{2}{c|}{\textbf{AIME25}} & \multicolumn{2}{c|}{\textbf{AMC23}} & \multicolumn{2}{c}{\textbf{Average}} \\ \cmidrule(l){2-9} 
                        & len           & acc          & len           & acc         & len          & acc         & len        & acc        \\ \midrule
Fixed Coefficient       & 1897          & 80.3           & 6049          & 21.7          & 2987         & 70.6          & 3644       & 57.5       \\
Symmetric                & 2902          & 83.4         & 8761          & 24.2        & 4927         & 72.9        & 5530       & 60.2 \\
Linear                  & 2439          & 83           & 6909          & 21.7        & 3379         & 70          & 4242 & 58.2 \\
\rowcolor{Teal!10}
\textbf{\method}                    & 2645          & \textbf{84.5}         & 8431          & \textbf{25}          & 4421         & \textbf{76.3 }       & 5169 & \textbf{61.9} \\ \bottomrule
\end{tabular}}
\end{table}

To evaluate the contributions of the dynamic length reward and dynamic length-reward coefficient, we consider three reward configurations for the ablation study: \textbf{Fixed Coefficient}, \textbf{Symmetric}, \textbf{Linear}. The details of these configurations are shown in Appendix~\ref{sec:ablation_setting}.

We present the results in \autoref{tab:ablation}. The results show that \method outperforms all ablation settings, indicating that each component contributes to the final performance. It is worth noting that under the \textbf{Symmetric}, although the output length is reduced compared to the base model, there remains a gap in both efficiency and accuracy compared to \method, indicating that while forcing all correct trajectories to approach the optimal length can reduce length, simply shortening excessively long trajectories yields better results.

\subsection{\method's Impact on CoT Structure}

To quantify the optimization effect of the model's chain-of-thought structure, we have additionally used LLM-as-a-judge to classify the model's reasoning structure. We use DeepSeek-V3.2~\cite{liu2025deepseek} as the judge model and employ the same system prompt as in ShorterBetter~\cite{yi2026shorterbetter}. We examine the reasoning structures of the 1.5B models on 50 randomly selected problems from math500 before and after training, which is presented in \autoref{tab:struct}. The results show that after training, the model produces fewer exploratory detours and repetitive statements, while increasing the proportion of key reasoning paths. This result indicates that even without explicit process rewards, simply allocating sparse outcome rewards appropriately can improve the structure of CoT.

\begin{table}[t]
\centering
\caption{\method's impact on CoT structure.}
\label{tab:struct}
\resizebox{.9\linewidth}{!}{ 
\begin{tabular}{@{}ccc@{}}
\toprule
\textbf{Category} & \textbf{Base Model} & \textbf{\method} \\
\midrule
Pivotal Reasoning          & 6.13\% & 16.04\% \\
Repetition \& Rephrasing   & 1.13\% & 0.68\%  \\
Exploring Alternatives     & 86.86\%& 77.29\% \\
Verification \& Explanation& 2.73\% & 2.90\%  \\
Non-Substantive Statement  & 3.05\% & 3.09\%  \\
\bottomrule
\end{tabular}}
\end{table}

\section{Related Works}


\paragraph{Large Reasoning Models (LRMs).}
The academic community has discovered that designing prompts to guide models to output chain-of-thought(CoT) can improve the accuracy of non-reasoning models~\cite{wei2022chain}. The emergence of OpenAI o1~\cite{jaech2024openai} illustrates LRMs' capabilities on complicated logical tasks like mathematics, coding, and scientific problems. As a groundbreaking advancement, DeepSeek-R1~\cite{deepscaler2025} provides an efficient and scalable training method for LRM, sparking a surge in reinforcement learning-based LRM training. Nowadays, whether open-source large models~\cite{yang2025qwen3, liu2025deepseek, zeng2025glm} or closed-source commercial models~\cite{comanici2025gemini}, LRM has become a mature new paradigm for large language models.

\paragraph{RL-Based Post Training of LLMs.}Reinforcement Learning (RL) has become the standard for aligning LLMs with complex reasoning tasks. The main algorithms of LLM in RL include Proximal Policy Optimization(PPO)~\cite{schulman2017proximal, ouyang2022training}, Direct Preference Optimization(DPO)~\cite{rafailov2023direct}, etc. The introduction of Group Relative Policy Optimization (GRPO)~\cite{shao2024deepseekmath} provides a simpler and more effective method for LRM training by removing the need for a separate value model and instead using group-wise relative rewards.  There have already been many improved algorithms based on GRPO, such as DAPO~\cite{yu2026dapo}, GSPO~\cite{zheng2025group}, SAPO~\cite{gao2025soft}, and so on.

\paragraph{Efficient LLM Reasoning.} There have been many methods focusing on efficient LLM reasoning, including training-free and training-based methods. Training-free methods involve techniques such as truncating the chain-of-thought, selecting the optimal path by single-step sampling~\cite{xu2026thought}, and using spline vectors to induce fast thinking~\cite{lin2026controlling}. These methods usually have limited compression or introduce additional overhead. Training-based methods~\cite{yi2026shorterbetter, liu2025learn, hou2025thinkprune} can achieve more flexible length control by designing training objectives, while even improving accuracy.
\section{Conclusion and Limitations}

In this paper, we propose \method, a novel GRPO-based efficient reasoning method that shortens the CoT length of LRMs by progressive CoT length calibration. We derive an estimate of the reasoning length that maximizes the probability of correct answers through probabilistic modeling. During training, we estimate the optimal length based on the distribution of response lengths and correctness for each question, and implement dynamic length calibration by guiding overly long answers to approach the optimal length. We further design a dynamic length-reward coefficient to prevent correct trajectories from receiving negative advantages. We validate the effectiveness of \method through extensive experiments, demonstrating that \method can improve the model’s reasoning capability while compressing the CoT length, achieving a balance between efficiency and accuracy.

Although we have verified the effectiveness of \method on GRPO, its performance on other variants of GRPO remains to be validated, and the effectiveness of \method on some open-ended questions is still difficult to assess. Moreover, like other methods, \method is still limited to designing outcome-based rewards, lacking process reward supervision, making it difficult to discover fine-grained beneficial reasoning patterns during the reasoning process. Combining \method with fine-grained process rewards is a promising direction for future work.
\section*{Acknowledgements}
This work was sponsored in part by China NSF grant No. U25A6024, 62472278, 62432007, 62441236, 62332014, 62332013, and 62225202, and Shanghai QiYuan Innovation Foundation. 
This work is partially supported by SJTU Kunpeng \& Ascend Center of Excellence.
The opinions, findings, conclusions, and recommendations in this paper are those of the authors and do not necessarily reflect the views of the funding agencies or the government.
\section*{Impact Statement}

This paper presents work whose goal is to advance the field of machine learning. There are many potential societal consequences of our work, none of which we feel must be specifically highlighted here.


\newpage
\bibliography{paper}

@article{shao2024deepseekmath,
  title={Deepseekmath: Pushing the limits of mathematical reasoning in open language models},
  author={Shao, Zhihong and Wang, Peiyi and Zhu, Qihao and Xu, Runxin and Song, Junxiao and Bi, Xiao and Zhang, Haowei and Zhang, Mingchuan and Li, YK and Wu, Yang and others},
  journal={arXiv preprint arXiv:2402.03300},
  year={2024}
}

@article{guo2025deepseek,
  title={Deepseek-r1: Incentivizing reasoning capability in llms via reinforcement learning},
  author={Guo, Daya and Yang, Dejian and Zhang, Haowei and Song, Junxiao and Zhang, Ruoyu and Xu, Runxin and Zhu, Qihao and Ma, Shirong and Wang, Peiyi and Bi, Xiao and others},
  journal={arXiv preprint arXiv:2501.12948},
  year={2025}
}

@article{yi2026shorterbetter,
  title={Shorterbetter: Guiding reasoning models to find optimal inference length for efficient reasoning},
  author={Yi, Jingyang and Wang, Jiazheng and Li, Sida},
  journal={Advances in Neural Information Processing Systems},
  volume={38},
  pages={39011--39043},
  year={2026}
}

@article{yang2025qwen3,
  title={Qwen3 technical report},
  author={Yang, An and Li, Anfeng and Yang, Baosong and Zhang, Beichen and Hui, Binyuan and Zheng, Bo and Yu, Bowen and Gao, Chang and Huang, Chengen and Lv, Chenxu and others},
  journal={arXiv preprint arXiv:2505.09388},
  year={2025}
}

@misc{deepscaler2025,
  title={DeepScaleR: Surpassing O1-Preview with a 1.5B Model by Scaling RL},
  author={Michael Luo and Sijun Tan and Justin Wong and Xiaoxiang Shi and William Tang and Manan Roongta and Colin Cai and Jeffrey Luo and Tianjun Zhang and Erran Li and Raluca Ada Popa and Ion Stoica},
  year={2025},
  howpublished={\url{https://pretty-radio-b75.notion.site/DeepScaleR-Surpassing-O1-Preview-with-a-1-5B-Model-by-Scaling-RL-19681902c1468005bed8ca303013a4e2}},
  note={Notion Blog},
  year={2025}
}

@inproceedings{gao2025omni,
  title={Omni-math: A universal olympiad level mathematic benchmark for large language models},
  author={Gao, Bofei and Song, Feifan and Yang, Zhe and Cai, Zefan and Miao, Yibo and Dong, Qingxiu and Li, Lei and Ma, Chenghao and Chen, Liang and Tang, Zhengyang and others},
  booktitle={International Conference on Learning Representations},
  volume={2025},
  pages={100540--100569},
  year={2025}
}

@article{nayab2024concise,
  title={Concise thoughts: Impact of output length on llm reasoning and cost},
  author={Nayab, Sania and Rossolini, Giulio and Simoni, Marco and Saracino, Andrea and Buttazzo, Giorgio and Manes, Nicolamaria and Giacomelli, Fabrizio},
  journal={arXiv preprint arXiv:2407.19825},
  year={2024}
}

@article{wu2025more,
  title={When more is less: Understanding chain-of-thought length in llms},
  author={Wu, Yuyang and Wang, Yifei and Ye, Ziyu and Du, Tianqi and Jegelka, Stefanie and Wang, Yisen},
  journal={arXiv preprint arXiv:2502.07266},
  year={2025}
}

@inproceedings{han2025token,
  title={Token-budget-aware llm reasoning},
  author={Han, Tingxu and Wang, Zhenting and Fang, Chunrong and Zhao, Shiyu and Ma, Shiqing and Chen, Zhenyu},
  booktitle={Findings of the Association for Computational Linguistics: ACL 2025},
  pages={24842--24855},
  year={2025}
}

@article{chen2024not,
  title={Do not think that much for 2+ 3=? on the overthinking of o1-like llms},
  author={Chen, Xingyu and Xu, Jiahao and Liang, Tian and He, Zhiwei and Pang, Jianhui and Yu, Dian and Song, Linfeng and Liu, Qiuzhi and Zhou, Mengfei and Zhang, Zhuosheng and others},
  journal={arXiv preprint arXiv:2412.21187},
  year={2024}
}

@inproceedings{sheng2025hybridflow,
  title={Hybridflow: A flexible and efficient rlhf framework},
  author={Sheng, Guangming and Zhang, Chi and Ye, Zilingfeng and Wu, Xibin and Zhang, Wang and Zhang, Ru and Peng, Yanghua and Lin, Haibin and Wu, Chuan},
  booktitle={Proceedings of the Twentieth European Conference on Computer Systems},
  pages={1279--1297},
  year={2025}
}

@article{aggarwal2025l1,
  title={L1: Controlling how long a reasoning model thinks with reinforcement learning},
  author={Aggarwal, Pranjal and Welleck, Sean},
  journal={arXiv preprint arXiv:2503.04697},
  year={2025}
}

@article{liu2025learn,
  title={Learn to reason efficiently with adaptive length-based reward shaping},
  author={Liu, Wei and Zhou, Ruochen and Deng, Yiyun and Huang, Yuzhen and Liu, Junteng and Deng, Yuntian and Zhang, Yizhe and He, Junxian},
  journal={arXiv preprint arXiv:2505.15612},
  year={2025}
}

@article{jaech2024openai,
  title={Openai o1 system card},
  author={Jaech, Aaron and Kalai, Adam and Lerer, Adam and Richardson, Adam and El-Kishky, Ahmed and Low, Aiden and Helyar, Alec and Madry, Aleksander and Beutel, Alex and Carney, Alex and others},
  journal={arXiv preprint arXiv:2412.16720},
  year={2024}
}

@article{wei2022chain,
  title={Chain-of-thought prompting elicits reasoning in large language models},
  author={Wei, Jason and Wang, Xuezhi and Schuurmans, Dale and Bosma, Maarten and Xia, Fei and Chi, Ed and Le, Quoc V and Zhou, Denny and others},
  journal={Advances in neural information processing systems},
  volume={35},
  pages={24824--24837},
  year={2022}
}

@article{liu2025deepseek,
  title={Deepseek-v3. 2: Pushing the frontier of open large language models},
  author={Liu, Aixin and Mei, Aoxue and Lin, Bangcai and Xue, Bing and Wang, Bingxuan and Xu, Bingzheng and Wu, Bochao and Zhang, Bowei and Lin, Chaofan and Dong, Chen and others},
  journal={arXiv preprint arXiv:2512.02556},
  year={2025}
}

@article{zeng2025glm,
  title={Glm-4.5: Agentic, reasoning, and coding (arc) foundation models},
  author={Zeng, Aohan and Lv, Xin and Zheng, Qinkai and Hou, Zhenyu and Chen, Bin and Xie, Chengxing and Wang, Cunxiang and Yin, Da and Zeng, Hao and Zhang, Jiajie and others},
  journal={arXiv preprint arXiv:2508.06471},
  year={2025}
}

@article{comanici2025gemini,
  title={Gemini 2.5: Pushing the frontier with advanced reasoning, multimodality, long context, and next generation agentic capabilities},
  author={Comanici, Gheorghe and Bieber, Eric and Schaekermann, Mike and Pasupat, Ice and Sachdeva, Noveen and Dhillon, Inderjit and Blistein, Marcel and Ram, Ori and Zhang, Dan and Rosen, Evan and others},
  journal={arXiv preprint arXiv:2507.06261},
  year={2025}
}

@article{ouyang2022training,
  title={Training language models to follow instructions with human feedback},
  author={Ouyang, Long and Wu, Jeffrey and Jiang, Xu and Almeida, Diogo and Wainwright, Carroll and Mishkin, Pamela and Zhang, Chong and Agarwal, Sandhini and Slama, Katarina and Ray, Alex and others},
  journal={Advances in neural information processing systems},
  volume={35},
  pages={27730--27744},
  year={2022}
}

@article{schulman2017proximal,
  title={Proximal policy optimization algorithms},
  author={Schulman, John and Wolski, Filip and Dhariwal, Prafulla and Radford, Alec and Klimov, Oleg},
  journal={arXiv preprint arXiv:1707.06347},
  year={2017}
}

@article{rafailov2023direct,
  title={Direct preference optimization: Your language model is secretly a reward model},
  author={Rafailov, Rafael and Sharma, Archit and Mitchell, Eric and Manning, Christopher D and Ermon, Stefano and Finn, Chelsea},
  journal={Advances in neural information processing systems},
  volume={36},
  pages={53728--53741},
  year={2023}
}

@article{yu2026dapo,
  title={Dapo: An open-source llm reinforcement learning system at scale},
  author={Yu, Qiying and Zhang, Zheng and Zhu, Ruofei and Yuan, Yufeng and Zuo, Xiaochen and Yue, Yu and Dai, Weinan and Fan, Tiantian and Liu, Gaohong and Liu, Lingjun and others},
  journal={Advances in Neural Information Processing Systems},
  volume={38},
  pages={113222--113244},
  year={2026}
}

@article{zheng2025group,
  title={Group sequence policy optimization},
  author={Zheng, Chujie and Liu, Shixuan and Li, Mingze and Chen, Xiong-Hui and Yu, Bowen and Gao, Chang and Dang, Kai and Liu, Yuqiong and Men, Rui and Yang, An and others},
  journal={arXiv preprint arXiv:2507.18071},
  year={2025}
}

@article{gao2025soft,
  title={Soft adaptive policy optimization},
  author={Gao, Chang and Zheng, Chujie and Chen, Xiong-Hui and Dang, Kai and Liu, Shixuan and Yu, Bowen and Yang, An and Bai, Shuai and Zhou, Jingren and Lin, Junyang},
  journal={arXiv preprint arXiv:2511.20347},
  year={2025}
}

@article{xu2026thought,
  title={A*-thought: Efficient reasoning via bidirectional compression for low-resource settings},
  author={Xu, Xiaoang and Wang, Shuo and Han, Xu and Liu, Zhenghao and Wu, Huijia and Li, Peipei and Liu, Zhiyuan and Sun, Maosong and He, Zhaofeng},
  journal={Advances in Neural Information Processing Systems},
  volume={38},
  pages={107714--107742},
  year={2026}
}

@article{lin2026controlling,
  title={Controlling thinking speed in reasoning models},
  author={Lin, Zhengkai and Fu, Zhihang and Chen, Ze and Chen, Chao and Xie, Liang and Wang, Wenxiao and Cai, Deng and Wang, Zheng and Ye, Jieping},
  journal={Advances in Neural Information Processing Systems},
  volume={38},
  pages={78300--78347},
  year={2026}
}

@article{hou2025thinkprune,
  title={Thinkprune: Pruning long chain-of-thought of llms via reinforcement learning},
  author={Hou, Bairu and Zhang, Yang and Ji, Jiabao and Liu, Yujian and Qian, Kaizhi and Andreas, Jacob and Chang, Shiyu},
  journal={arXiv preprint arXiv:2504.01296},
  year={2025}
}

@inproceedings{chen2025pre3,
  title={Pre3: Enabling deterministic pushdown automata for faster structured LLM generation},
  author={Chen, Junyi and Bai, Shihao and Wang, Zaijun and Wu, Siyu and Du, Chuheng and Yang, Hailong and Gong, Ruihao and Liu, Shengzhong and Wu, Fan and Chen, Guihai},
  booktitle={Proceedings of the 63rd Annual Meeting of the Association for Computational Linguistics (Volume 1: Long Papers)},
  pages={11253--11267},
  year={2025}
}

@article{tu2026learning,
  title={Learning when to think: Shaping adaptive reasoning in r1-style models via multi-stage rl},
  author={Tu, Songjun and Lin, Jiahao and Zhang, Qichao and Tian, Xiangyu and Li, Linjing and Lan, Xiangyuan and Zhao, Dongbin},
  journal={Advances in Neural Information Processing Systems},
  volume={38},
  pages={15181--15207},
  year={2026}
}

@inproceedings{amini2019mathqa,
  title={Mathqa: Towards interpretable math word problem solving with operation-based formalisms},
  author={Amini, Aida and Gabriel, Saadia and Lin, Shanchuan and Koncel-Kedziorski, Rik and Choi, Yejin and Hajishirzi, Hannaneh},
  booktitle={Proceedings of the 2019 conference of the North American chapter of the association for computational linguistics: Human language technologies, volume 1 (long and short papers)},
  pages={2357--2367},
  year={2019}
}

@article{hendrycks2020measuring,
  title={Measuring massive multitask language understanding},
  author={Hendrycks, Dan and Burns, Collin and Basart, Steven and Zou, Andy and Mazeika, Mantas and Song, Dawn and Steinhardt, Jacob},
  journal={arXiv preprint arXiv:2009.03300},
  year={2020}
}

@article{hendrycks2020aligning,
  title={Aligning ai with shared human values},
  author={Hendrycks, Dan and Burns, Collin and Basart, Steven and Critch, Andrew and Li, Jerry and Song, Dawn and Steinhardt, Jacob},
  journal={arXiv preprint arXiv:2008.02275},
  year={2020}
}

@inproceedings{jain2025livecodebench,
  title={Livecodebench: Holistic and contamination free evaluation of large language models for code},
  author={Jain, Naman and Gu, Alex and Li, Wen-Ding and Yan, Fanjia and Zhang, Tianjun and Wang, Sida and Solar-Lezama, Armando and Sen, Koushik and Stoica, Ion},
  booktitle={International Conference on Learning Representations},
  volume={2025},
  pages={58791--58831},
  year={2025}
}

@article{chen2021evaluating,
  title={Evaluating large language models trained on code},
  author={Chen, Mark},
  journal={arXiv preprint arXiv:2107.03374},
  year={2021}
}

@inproceedings{he2024olympiadbench,
  title={Olympiadbench: A challenging benchmark for promoting agi with olympiad-level bilingual multimodal scientific problems},
  author={He, Chaoqun and Luo, Renjie and Bai, Yuzhuo and Hu, Shengding and Thai, Zhen and Shen, Junhao and Hu, Jinyi and Han, Xu and Huang, Yujie and Zhang, Yuxiang and others},
  booktitle={Proceedings of the 62nd Annual Meeting of the Association for Computational Linguistics (Volume 1: Long Papers)},
  pages={3828--3850},
  year={2024}
}

@inproceedings{chen2026tokenflow,
  title={TokenFlow: Responsive LLM Text Streaming Serving under Request Burst via Preemptive Scheduling},
  author={Chen, Junyi and Du, Chuheng and Liu, Renyuan and Yao, Shuochao and Yan, Dingtian and Liao, Jiang and Liu, Shengzhong and Wu, Fan and Chen, Guihai},
  booktitle={Proceedings of the 21st European Conference on Computer Systems},
  pages={497--513},
  year={2026}
}

@misc{xu2026bubblespec,
      title={BubbleSpec: Turning Long-Tail Bubbles into Speculative Rollout Drafts for Synchronous Reinforcement Learning}, 
      author={Yuhang Xu and Kaibin Tian and Yang Tian and Zhice Yang and Yifeng Yu and Yan Li and Shengzhong Liu and Fan Wu and Guihai Chen},
      year={2026},
      eprint={2605.08862},
      archivePrefix={arXiv},
      primaryClass={cs.LG},
      url={https://arxiv.org/abs/2605.08862}, 
}
\bibliographystyle{icml2026}

\newpage
\appendix
\onecolumn

\section{Math Supplement}
\label{sec:math}

\subsection{Proof of \autoref{thm:opt_len}}
In this section, we provide the detailed proof process of \autoref{thm:opt_len}
\begin{tcolorbox}[
    colback=Sky!10!white, 
    colframe=Sky,
    coltitle=Base,
    title = {\autoref{thm:opt_len}}
]
Let $\theta$ be the parameter of the current policy, $q$ be the input prompt. Assume that the distribution of $l$ under policy $\pi_\theta$ is $(l\mid q; \theta)\sim N(\mu_1,\sigma_1^2)$ and the distribution of $l$ given the correct reasoning condition is $\left(l\mid r^{acc}=1,q;\theta\right)\sim N(\mu_2,\sigma_2^2)$. $\Pr\left(r^{acc}=1\mid l;\theta\right)$ has a unique finite maximum $\iff \sigma_1^2>\sigma_2^2$, and the point is $\arg\max\limits_l\Pr\left(r^{acc}=1\mid l;\theta\right)=\frac{\sigma_1^2\mu_2-\sigma_2^2\mu_1}{\sigma_1^2-\sigma_2^2}$.
\tcblower

\label{sec:proof}
\begin{proof}
By Bayes' theorem, the posterior probability of correct reasoning given the length $l$ is expressed as:
\begin{equation}
\Pr(r^{acc}=1\mid l,q;\theta) = \frac{p(l\mid r^{acc}=1,q;\theta) \Pr(r^{acc}=1\mid q;\theta)}{p(l\mid q;\theta)}
\end{equation}
Since $\Pr(r^{acc}=1\mid \theta)$ is independent of $l$, the optimization objective satisfies:
\begin{equation}
\arg\max_l \Pr(r^{acc}=1\mid l,q;\theta) = \arg\max_l \frac{p(l\mid r^{acc}=1,q; \theta)}{p(l\mid q;\theta)}
\end{equation}
Substituting the probability density functions for $N(\mu_2, \sigma_2^2)$ and $N(\mu_1, \sigma_1^2)$, we obtain:
\begin{equation}
\frac{p(l\mid r^{acc}=1,q;\theta)}{p(l\mid q;\theta)} = \frac{\sigma_1}{\sigma_2} \exp \left( \frac{(l-\mu_1)^2}{2\sigma_1^2} - \frac{(l-\mu_2)^2}{2\sigma_2^2} \right)
\end{equation}
Taking the natural logarithm and omitting terms constant with respect to $l$, we define the objective function:
\begin{equation}
f(l) = \frac{(l-\mu_1)^2}{2\sigma_1^2} - \frac{(l-\mu_2)^2}{2\sigma_2^2}
\end{equation}
The first-order condition for the maximum is $\frac{df}{dl} = 0$:
\begin{equation}
\frac{l-\mu_1}{\sigma_1^2} - \frac{l-\mu_2}{\sigma_2^2} = 0 \implies (l-\mu_1)\sigma_2^2 = (l-\mu_2)\sigma_1^2
\end{equation}
Solving for $l$ yields the critical point:
\begin{equation}
l^\text{opt} = \frac{\sigma_1^2\mu_2 - \sigma_2^2\mu_1}{\sigma_1^2 - \sigma_2^2}
\end{equation}
To confirm this is a maximum, we examine the second derivative:
\begin{equation}
\frac{d^2f}{dl^2} = \frac{1}{\sigma_1^2} - \frac{1}{\sigma_2^2}
\end{equation}
If and only if  $\sigma_1^2 > \sigma_2^2$, it follows that $\frac{d^2f}{dl^2} < 0$, which ensures that $l^\text{opt}$ is a global maximum.
\end{proof}

\end{tcolorbox}

\subsection{In-depth Analysis of the Optimal Length $l^\text{opt}$}
\label{sec:condition}
To better understand the properties of the proposed optimal reasoning length $l^\text{opt}$, we analyze the log-conditional-probability $g(l) = \ln \Pr(r^\text{acc}=1 \mid l)$. According to Bayes' theorem, $\Pr(r^\text{acc}=1 \mid l) = \frac{\Pr(l \mid r^\text{acc}=1)\Pr(r^\text{acc}=1)}{\Pr(l)}$. Maximizing this probability with respect to $l$ is equivalent to maximizing the log-density ratio $h(l) = \ln \frac{p_2(l)}{p_1(l)}$, where $p_1, p_2$ are the probability density functions of $N(\mu_1, \sigma_1^2)$ and $N(\mu_2, \sigma_2^2)$ respectively. \autoref{fig:opt_len} simply illustrates how the curve of $h(l)$ changes under different conditions.
\begin{figure}[t]
    \centering
    \includegraphics[width=1\linewidth]{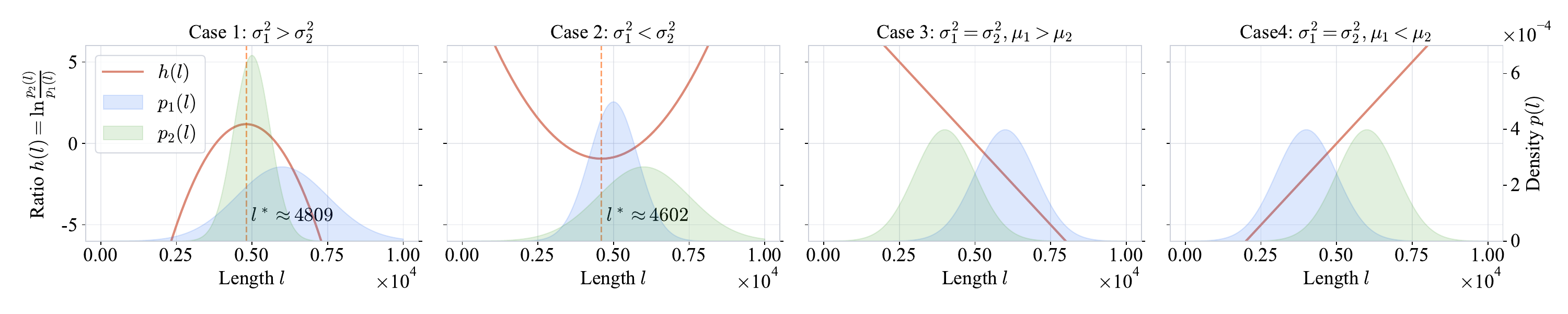}
    \caption{The curve of $h(l)$ under different conditions.}
    \label{fig:opt_len}
\end{figure}

\paragraph{Curvature and Stability.} 
The second derivative of the objective function is:
\begin{equation}
    \frac{\partial^2}{\partial l^2} \ln \frac{p_2(l)}{p_1(l)} = \frac{1}{\sigma_1^2} - \frac{1}{\sigma_2^2}.
\end{equation}
\begin{itemize}
    \item \textbf{Concentration Case ($\sigma_1^2 > \sigma_2^2$):} The second derivative is negative, indicating that $l^*=\frac{\sigma_1^2\mu_2-\sigma_2^2\mu_1}{\sigma_1^2-\sigma_2^2}$ is a unique \textit{maximum point}. In LLM reasoning, this is the most common scenario: while incorrect trajectories vary significantly (lengthy loops or abrupt failures), correct trajectories tend to concentrate within a specific logical complexity. The policy is thus encouraged to stay within this ``reliable'' length zone.
    \item \textbf{Dispersion Case ($\sigma_1^2 < \sigma_2^2$):} $l^*=\frac{\sigma_1^2\mu_2-\sigma_2^2\mu_1}{\sigma_1^2-\sigma_2^2}$ becomes a \textit{minimum point}. This implies that the correctness probability is higher at extreme lengths (very short or very long) and lowest at $l^*$. For variable stability, we clip these cases to the observed boundaries in our implementation.
    \item \textbf{Degenerate Case ($\sigma_1^2 = \sigma_2^2$):} Due to the randomness of parameter estimation, this is an almost impossible scenario. However, for the sake of rigor, we discuss this scenario in detail below.
\end{itemize}

\paragraph{Analysis of the Degenerate Case ($\sigma_1^2 = \sigma_2^2$).}
Although the exact equality $\sigma_1 = \sigma_2$ is unlikely in practice with sampled data, it represents a critical theoretical threshold where the curvature of the log-probability ratio vanishes. When variances are identical, the quadratic terms in the exponent of the Gaussian distributions cancel out:
\begin{equation}
    h(l) = \ln \frac{p_2(l)}{p_1(l)} = \frac{1}{2\sigma^2} \left[ (l-\mu_1)^2 - (l-\mu_2)^2 \right] + C = \frac{\mu_2 - \mu_1}{\sigma^2} l + C',
\end{equation}
where $C$ and $C'$ are constants independent of $l$. In this case, the relationship between length and correctness becomes \textit{strictly monotonic}:
\begin{itemize}
    \item If $\mu_1 > \mu_2$, the slope is negative, implying that every additional token strictly decreases the likelihood of success. This suggests that the model should adopt a ``minimalist'' strategy ($l^\text{opt} \to 0$).
    \item If $\mu_1 < \mu_2$, the slope is positive, suggesting that accuracy is directly proportional to reasoning length, pushing the optimal target to be as long as possible ($l^\text{opt} \to \infty$).
    \item If $\mu_1 = \mu_2$ and $\sigma_1^2 = \sigma_2^2$, then $h(l)$ is constant, meaning length provides no information about correctness.
\end{itemize}
This linear degradation highlights the importance of the variance term in our theorem. When $\sigma_1^2 > \sigma_2^2$, the variance difference provides the ``repelling force'' that creates a stable, finite peak for the optimal length, preventing the model from collapsing into trivial (zero length) or infinitely redundant outputs.

\paragraph{Geometric Interpretation of the Shift.}
The optimal length can be rewritten as a biased estimate of the mean of correct trajectories:
\begin{equation}
    l^\text{opt} = \mu_2 + \underbrace{\frac{\sigma_2^2}{\sigma_1^2 - \sigma_2^2}(\mu_2 - \mu_1)}_{\text{Correction Bias}}.
\end{equation}
This decomposition reveals how the disparity between correct and total samples shifts the target:
\begin{itemize}
    \item If $\mu_2 < \mu_1$ (correct samples are shorter on average), the correction bias is negative. Thus, $l^\text{opt}$ will be \textit{smaller} than the mean correct length $\mu_2$. This mathematically justifies a ``strict efficiency'' penalty, where the model is pushed to be even more concise than the current average success.
    \item The term $\frac{\sigma_2^2}{\sigma_1^2 - \sigma_2^2}$ acts as a ``variance leverage''. If the correct trajectories are highly consistent ($\sigma_2^2 \to 0$), then $l^\text{opt} \approx \mu_2$, relying directly on the empirical success mean.
\end{itemize}
\paragraph{Summary of Parameter Relationships.}
\autoref{tab:l_star_analysis} summarizes the theoretical behavior of $l^*$ under different statistical conditions.
\begin{table}[ht]
    \centering
    \caption{Behavior of $l^\text{opt}$ under different $\mu$ and $\sigma$ conditions.}
    \label{tab:l_star_analysis}
    \small
    \begin{tabular}{@{}llll@{}}
    \toprule
    Condition & Mathematical Property & Reasoning Intuition & $l^\text{opt}$ Strategy \\ \midrule
    $\sigma_1^2 > \sigma_2^2$ & Local Maximum & Existence of a "Golden Length" & Computed $l^\text{opt}$ \\
    $\sigma_1^2 = \sigma_2^2, \mu_1 > \mu_2$ & Linear Decrease & Shorter is strictly better & $0$ (min length) \\
    $\sigma_1^2 = \sigma_2^2, \mu_1 < \mu_2$ & Linear Increase & Complexity aids accuracy & $+\infty$ (max length) \\
    $\sigma_1^2 < \sigma_2^2$ & Local Minimum & Unstable middle-ground & 0 (boundary clipping) \\ \bottomrule
    \end{tabular}
\end{table}

\subsection{Details of Token-Wise GRPO Algorithm}
\label{sec:grpo}

Here we provide a complete version of GRPO objective with token-mean:
\begin{align}
    \mathcal{J}_\text{GRPO}(\theta)&=
    \mathbb{E}_{q\sim P\left(Q\right),\left\{o_i\right\}_{i=1}^G\sim\pi_\text{old}\left(\cdot|q\right)}
    \Bigg\{
        \frac{1}{\sum_{i=1}^G\lvert o_i\rvert} 
        \sum_{i=1}^G
        \sum_{t=1}^{\lvert o_i\rvert}
        \notag \\
        &\left[
            \min\left(
                r_{i,t}\left(\theta\right)\hat{A}_{i,t},
                \operatorname{clip}\left(r_{i,t}\left(\theta\right),1-\epsilon,1+\epsilon\right)\hat{A}_{i,t}
            \right)
            -
            \beta\mathbb{D}_\text{KL}\left(
                \pi_{\theta}\left(o_{i,t}\right)
                ||
                \pi_\text{ref}\left(o_{i,t}\right)
            \right)
        \right]
    \Bigg\},
\end{align}
where
\begin{equation}
    r_{i,t}\left(\theta\right)=
    \frac
    {\pi_\theta\left(o_{i,t}|q,o_{i,<t}\right)}
    {\pi_\text{old}\left(o_{i,t}|q,o_{i,<t}\right)},
\end{equation}
\begin{equation}
    \mathbb{D}_\text{KL}\left(
        \pi_{\theta}\left(o_{i,t}\right)
        ||
        \pi_\text{ref}\left(o_{i,t}\right)
    \right)
    =
    \frac
    {\pi_\text{ref}\left(o_{i,t}|q,o_{i,<t}\right)}
    {\pi_{\theta}\left(o_{i,t}|q,o_{i,<t}\right)}
    -
    \log\frac
    {\pi_\text{ref}\left(o_{i,t}|q,o_{i,<t}\right)}
    {\pi_{\theta}\left(o_{i,t}|q,o_{i,<t}\right)}
    -1.
\end{equation}

Considering training efficiency, $\beta$ can be set to 0 to reduce the overhead of computing $\pi_\text{ref}$.
\section{Evaluation Details}

\subsection{Prompt}

For both training and evaluation, we use the following prompt:
\begin{tcolorbox}[
    colback=Teal!10!white, 
    colframe=Teal,
    coltitle=Base,
    title = {Prompt}
]
\{question\}\\
Please reason step by step, and put your final answer within \textbackslash boxed\{\}.
\end{tcolorbox}

\subsection{AE Score}
\label{sec:ae}

The AE score is defined as

\begin{equation}
    \text{AE}=\begin{cases} 
    -\phi\cdot\Delta\text{Len.} + \eta\cdot\Delta\text{Acc.} & \text{if } \Delta\text{Acc.}\geqslant0, \\
    -\phi\cdot\Delta\text{Len.} + \theta \cdot\Delta\text{Acc.}, & \text{if } \Delta\text{Acc.}<0,
    \end{cases}
\end{equation}

where $\phi, \eta, \theta$ are all positive hyperparameters. To penalize performance degradation, generally $\theta>\eta$. We follow the default setting in DeepScaleR, \ie $\phi=1,\eta=3,\theta=5$.

\subsection{Baselines}
\label{sec:baselines}

\begin{itemize}
    \item \textbf{Base Model:}  We evaluate the performance of the original models without RL finetuning.
    \item \textbf{Truncated Think $n$k:} This means setting a maximum thinking length for the base model's CoT. When the chain of thought reaches the maximum length, directly output the \verb|</think>| token to end thinking and produce the final answer. We test this method to evaluate the performance of simple efficient reasoning approaches without training.
    \item \textbf{ThinkPrune-4k:} ThinkPrune is a GRPO-based efficient reasoning method which adopts length-pruning strategy in the reward design. We choose the model with the highest accuracy in ThinkPrune series.
    \item \textbf{ShorterBetter:} Differing from the naive length reward where trajectories within a group are independent, ShorterBetter designs two different non-monotonic reward functions for the cases of all answers being wrong and at least one correct answer existing. This approach can greatly compress the chain-of-thought length.
    \item \textbf{LASER-DE-4096:} This method comes from the LASER series. LASER designs a difficulty-aware reward, assigning different reward functions to different difficulty levels. We select LASER-DE-4096 as the baseline in this series of models because it strikes a good balance between performance and efficiency.
\end{itemize}

\subsection{Ablation Study Settings}
\label{sec:ablation_setting}

\begin{itemize}
    \item \textbf{Fixed Coefficient.} Do not use dynamic coefficient, directly setting $\lambda=0.001$. The overall reward is $r_i=r_i^\text{acc}+\lambda r_i^\text{len}$
    \item \textbf{Symmetric.} We have analyze that directly pushing all output length approaching optimal length can also reduce the length, so here we directly set the length reward as:
    \begin{equation}
        r_i^\text{len}=
        \begin{cases}
        0, & \text{if } r_i^\text{acc}=0, \\
        -\lvert l_i-l^\text{opt}\rvert, & \text{if } r_i^\text{acc}=1, \\
        \end{cases}
    \end{equation}
    \item \textbf{Linear.} A simple length reward setting, which is
    \begin{equation}
        r_i^\text{len}=
        \begin{cases}
        0, & \text{if } r_i^\text{acc}=0, \\
        l_i-\min(\mathcal{L}) & \text{if } r_i^\text{acc}=1, \\
        \end{cases}
    \end{equation}
\end{itemize}

\section{Additional Experiments}

\subsection{Gaussian Distribution Test of All Response Lengths and Correct Response Lengths}

To verify the reasonableness of our theoretical analysis assumptions, we conducted a Gaussian distribution test on the model's response length. We use \textbf{Shapiro-Wilk} and \textbf{Kolmogorov-Smirnov} methods to test the distributions. For both methods, the significance threshold was set to 0.05. In addition, we also use skewness and kurtosis to detect shape. We denote passing the Shapiro-Wilk test as $P_\text{SW}$, passing the Kolmogorov-Smirnov test as $P_\text{KS}$, and satisfying absolute skewness less than 1.5 and absolute kurtosis less than 3.0 as $P_\text{shape}$.

We generate 64 responses for each question on the Olympiad~\cite{he2024olympiadbench} dataset and filter out questions with fewer than 16 correct answers. The length distribution test of the remaining questions is shown in the ~\autoref{tab:gaussian_test}. The results show that, under tolerable shape deviations, more than half of the responses to the questions can be approximated by a Gaussian distribution. This proves the reasonableness of the premises in our theoretical analysis.

\begin{table}[t]
\centering
\caption{Gaussian test of length distributions.}
\label{tab:gaussian_test}
\resizebox{0.4\textwidth}{!}{ 
\begin{tabular}{@{}cccc@{}}
\toprule
  & $P_\text{SW}$    & $P_\text{KS}$ and $P_\text{shape}$    & $P_\text{KS}$    \\ \midrule
All Responses & 20.0   & 52.3 & 62.9 \\
Correct Responses & 27.7 & 60.0   & 74.3 \\ \bottomrule
\end{tabular}}
\end{table}

We also performed Hartigan's dip test on samples that failed the Kolmogorov-Smirnov test to assess their unimodality. The results are shown in \autoref{tab:unimodal}, which indicates that the vast majority of length distributions do not exhibit multimodal properties. Length distributions that failed the Gaussian test usually did so because of a relatively uniform distribution or peak shift. In such cases, using the mean and variance to describe these distributions remains effective.

\begin{table}[t]
\centering
\caption{Unimodal test results}
\label{tab:unimodal}
\resizebox{.45\textwidth}{!}{ 
\begin{tabular}{@{}c|c|c@{}}
\toprule
                   & Model                    & Pass Rate \\ \midrule
\multirow{2}{*}{All Responses}          & DeepSeek-R1-Distill-1.5B & 89.90\%   \\
                              & SmarThinker-Distill-1.5B & 94.50\%   \\ \midrule
\multirow{2}{*}{Correct Responses} & DeepSeek-R1-Distill-1.5B & 98.00\%   \\
                              & SmarThinker-Distill-1.5B & 100\%     \\ \bottomrule
\end{tabular}}
\end{table}

\subsection{Optimal Length Comparison between ShorterBetter and \method}

\begin{figure}[t]
    \centering
    \includegraphics[width=0.7\linewidth]{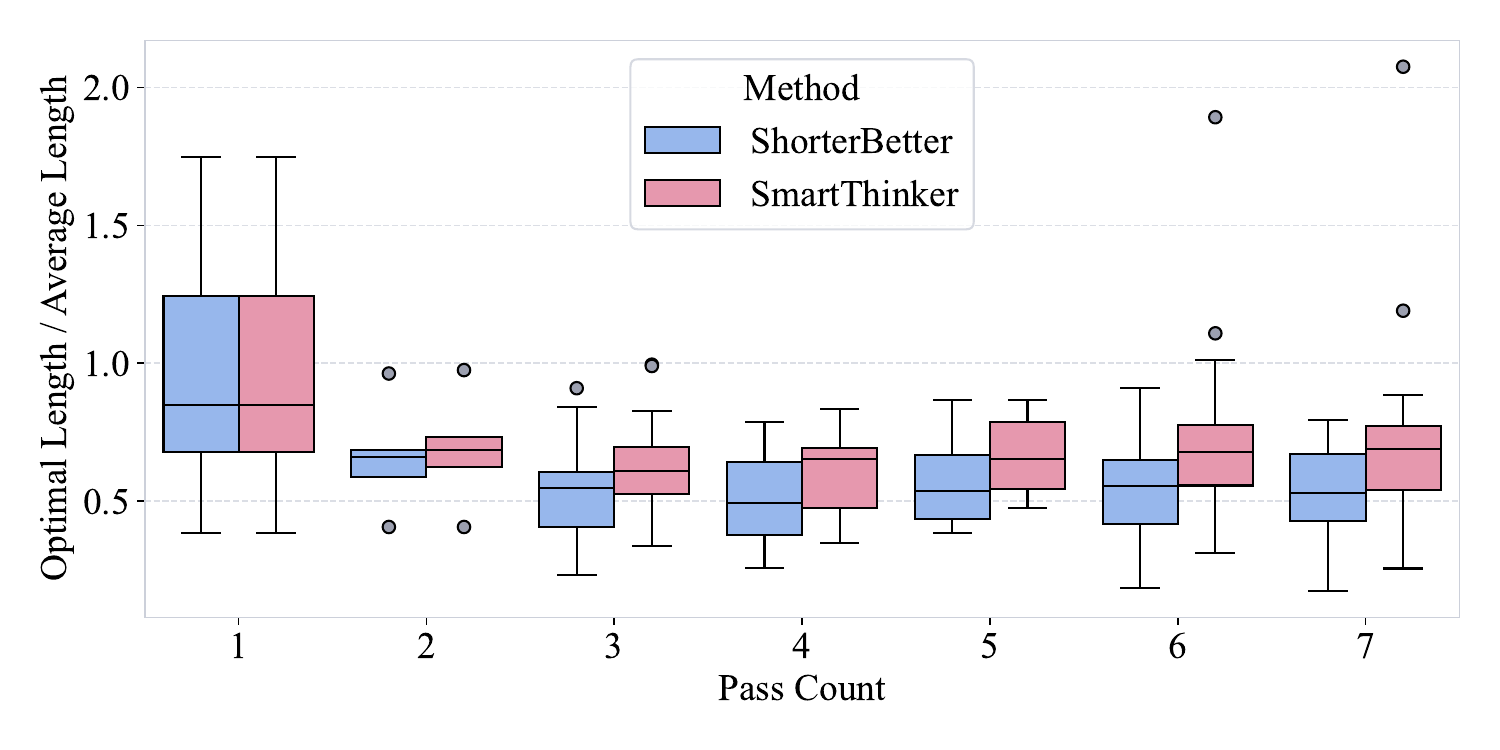}
    \caption{Comparison of optimal length under different pass counts.}
    \label{fig:opt_len_comp}
\end{figure}

We compare the optimal length of ShorterBetter and \method under different pass counts using DeepSeek-R1-Distill-Qwen-1.5B, which is presented in ~\autoref{fig:opt_len_comp}. Since the model's reasoning length is generally proportional to the question difficulty, for ease of comparison, we show the ratio of optimal length to average length in the figure. As can be seen from the figure, the optimal length of \method is significantly higher than that of ShorterBetter. This indicates that the length compression strategy of \method is more moderate. From the benchmark tests in ~\autoref{tab:main_results}, it can be observed that this more moderate compression strategy can maintain the model's high-quality reasoning ability while achieving compression.

We also observe that the optimal length does not change monotonically with the pass rate, indicating that the causes of model errors are diverse, possibly due to either overthinking or underthinking. This further highlights the importance of dynamically computing the optimal length.

\subsection{Out-of-Domain Tests on All Methods}

\label{sec:ood}

\begin{figure}[t]
    \centering
    \includegraphics[width=0.9\linewidth]{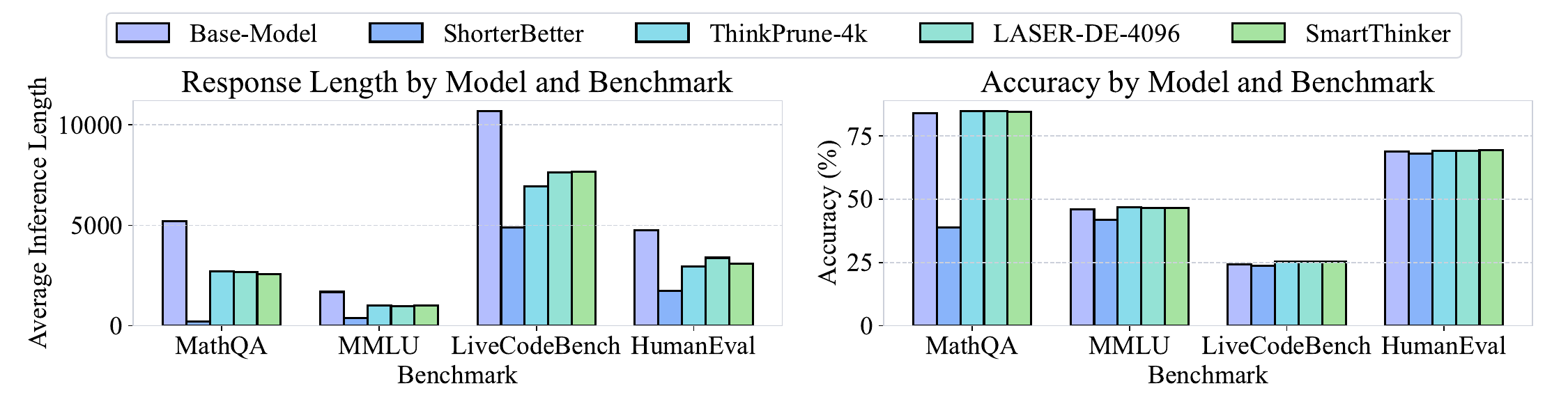}
    \caption{Test results of baseline and \method on four out-of-domain benchmarks.}
    \label{fig:ood}
\end{figure}

\begin{figure}[t]
    \centering
    \begin{minipage}{0.48\linewidth}
        \centering
        \includegraphics[width=\linewidth]{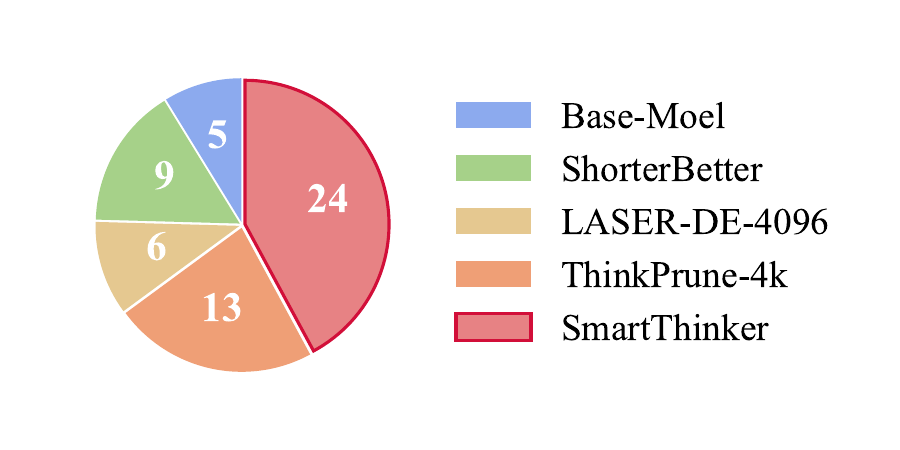}
        \caption{\#Rank1 MMLU subjects of each method.}
        \label{fig:mmlu_pi}
    \end{minipage}
    \hfill
    \begin{minipage}{0.48\linewidth}
        \centering
        \includegraphics[width=\linewidth]{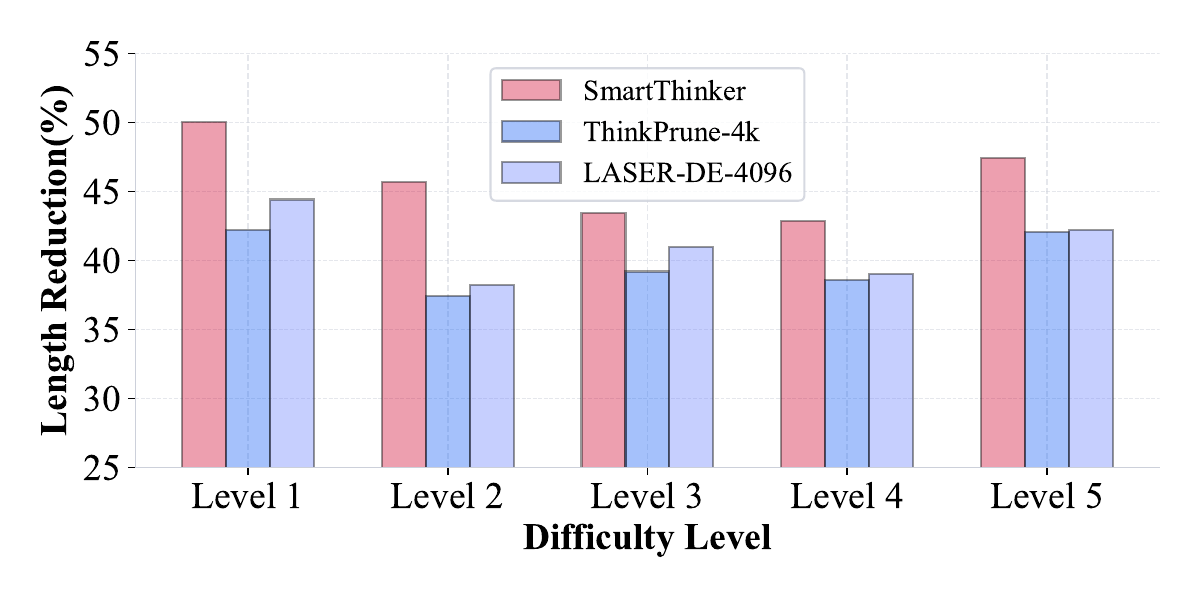}
        \caption{Compression ratios of different methods on problems of various difficulty levels in Math500. }
        \label{fig:math500_compress}
    \end{minipage}
\end{figure}

We test the 1.5B models of all methods on MathQA~\cite{amini2019mathqa}, MMLU~\cite{hendrycks2020measuring, hendrycks2020aligning},  LiveCodeBench~\cite{jain2025livecodebench}, and HumanEval~\cite{chen2021evaluating}. The results are shown in \autoref{fig:ood}, which shows that all methods except \method maintain or even improve accuracy while compressing output length, achieving comparable levels of efficiency and accuracy. We also counted the number of MMLU subjects in which each method ranked first, which is presented in \autoref{fig:mmlu_pi}. The figure shows that \method achieved the most first-place rankings, accounting for as high as 42\%. This indicates that the capability learned by LRMs on math problems can be transferred to other domains.

\subsection{Length Distribution on Math500 of Varying Difficulty}

We test the reasoning length compression ratio of questions with different difficulty levels using the built-in difficulty labels of Math500 on a 1.5B model. The results are shown in ~\autoref{fig:math500_compress} It can be observed that under the 1.5B parameter scale, our method achieves the highest compression rate across all difficulty levels. We also notice that the compression ratio of \method decreases as difficulty increases from level 1 to 4, but shows a significant rise at level 5, while the other methods exhibit more random length distributions. This indicates that models trained with the \method approach tend to increase output length for harder questions to improve accuracy, whereas for extremely difficult questions, due to a significant drop in correctness, the model shortens its output to avoid unnecessary token waste.
\section{Demonstration of an Example}

In this section, we use a case to demonstrate the differences between the model trained with \method and the base model in terms of reasoning length and thinking structure. The example is shown in the boxes below.

In the example, the sequence generated by the base model uses fewer tokens in the Plan stage but repeatedly attempts during exploration, resulting in significant token consumption. In contrast, the sequence generated by \method consumes more tokens in the Plan stage, enabling the model to derive the correct answer with fewer tokens during the exploration phase. This indicates that models trained with \method optimize the structure of their chain-of-thought, performing thorough planning before attempting solutions, thereby streamlining the reasoning process while improving accuracy.

\newpage

\begin{tcolorbox}[
    colback=Sky!10!white, 
    colframe=Sky,
    coltitle=Base,
    title = {Question},
    label=box:example
]
The 9 members of a baseball team went to an ice cream parlor after their game. Each player had a singlescoop cone of chocolate, vanilla, or strawberry ice cream. At least one player chose each flavor, and the number of players who chose chocolate was greater than the number of players who chose vanilla, which was greater than the number of players who chose strawberry. Let $N$ be the number of different assignments of flavors to players that meet these conditions. Find the remainder when $N$ is divided by 1000.

\tcbline

Ground Truth: 16

\end{tcolorbox}

\begin{multicols}{2}

\begin{tcolorbox}[
    colback=Maroon!10!white, 
    colframe=Maroon,
    coltitle=Base,
    title = {Base Model: 16404 Tokens}
]
\textbf{Understand the Question: 94 Tokens}

\textless think\textgreater

Okay, so I have this problem where 9 baseball team members each ordered a singlescoop ice cream $\cdots$ I need to find the number of different assignments that meet these conditions and then find the remainder when that number is divided by 1000.

\tcbline

\textbf{Plan: 134 Tokens}

Hmm, okay. Let's break this down step by step.
First, without any restrictions $\cdots$

So, step one is to compute the number of $\cdots$

\tcbline

\textbf{Initial Attempt: 5375 Tokens}

So, the total number of onto functions is given by the inclusion-exclusion principle $\cdots$

Wait, hold on, c=4, v + s=5 $\cdots$

Wait, hold on. Wait, we have to make sure $\cdots$

Wait, hold on, if c=3, then v must be 2 or 1 $\cdots$

But that might be more complicated.

\tcbline

\textbf{Repeated Attempts: 10025 Tokens}

Alternatively, perhaps using generating functions or other combinatorial techniques.

Wait, another approach: $\cdots$

Alternatively, perhaps the number of $\cdots$

Alternatively, perhaps I can compute $\cdots$

Therefore, the answer is 422.

\tcbline

\textbf{Double-Check: 311 Tokens}

But, in the 5-player case, 50 is the number of assignments where c \textgreater v \textgreater s$\cdots$

Therefore, the answer is 422.

But I'm not entirely confident$\cdots$

Therefore, I think the answer is 422.

\textless/think\textgreater

\tcbline

\textbf{Answer: 568 Tokens}

To solve the problem, we need to find $\cdots$

Thus, the final answer is \(\boxed{422}\).
\end{tcolorbox}

\begin{tcolorbox}[
    colback=Green!10!white, 
    colframe=Green,
    coltitle=Base,
    title = {\method: 2935 Tokens}
]
\textbf{Understand the Problem: 108 Tokens}

\textless think\textgreater

Alright, so I have this problem where 9 players from a baseball team went to an ice cream parlor $\cdots$

I need to find the number of different assignments of flavors to players that meet these conditions, and then find the remainder when that number is divided by 1000.

\tcbline

\textbf{Plan: 487 Tokens}

Okay, let me try to break this down step by step.

First, without any restrictions $\cdots$

But wait, it's not just about the counts $\cdots$

So, the problem reduces to:

$\cdots$

\tcbline

\textbf{Explore the Solution: 1564 Tokens}

So, first step is to find all ordered triples (C, V, S) with C \textgreater V \textgreater S $\geq$ 1 and C + V + S = 9.

$\cdots$

Wait, but since C \textgreater V \textgreater S, each triple is ordered $\cdots$
Wait, no, actually, for each such partition $\cdots$
Wait, no, a + b + c = 9,  so if a is as large as possible $\cdots$

$\cdots$

Thus, the remainder is 16.

\tcbline

\textbf{Double-Check: 269 Tokens}

Wait, is that it? That seems straightforward.

But let me double-check the calculations. $\cdots$

Therefore, the answer is 16.

**Final Answer**

\boxed{16}

\textless/think\textgreater

\tcbline

\textbf{Answer: 507 Tokens}

The problem involves finding the number of $\cdots$

Thus, the remainder when \(N\) is divided by 1000 is \(\boxed{16}\).

\end{tcolorbox}

\end{multicols}

\end{document}